\documentclass[letterpaper, 10 pt, conference]{ieeeconf}
\IEEEoverridecommandlockouts                              
\PassOptionsToPackage{table,xcdraw}{xcolor}
\usepackage{tikz}

\usepackage{comment}
\usepackage{notoccite}           
\usepackage[english]{babel}
\makeatletter
\newcommand{\l@en}{\l@english}
\makeatother

\usepackage[linesnumbered,ruled,vlined]{algorithm2e}

\usepackage{amssymb}
\usepackage{mathtools}           
\usepackage{bm}                  
\usepackage{cancel}              

\usepackage{tabularx}
\usepackage{booktabs}
\usepackage{array}
\usepackage{arydshln}            
\usepackage{makecell}
\usepackage{tikz}                
\usepackage{pgfplots}            
\usepackage{pgfplotstable}      
\usepgfplotslibrary{groupplots}
\pgfplotsset{compat=newest}
\usetikzlibrary{
	positioning,
	arrows.meta,
	calc,
	shapes.symbols,
	patterns,
	shapes
}
\pgfkeys{ 
  /pgf/number format/.cd, set decimal separator={.},
  set thousands separator = {} 
}

\makeatletter
\let\NAT@parse\undefined
\makeatother
\usepackage[colorlinks=true, allcolors=blue, hypertexnames=false]{hyperref}
\usepackage{bookmark}

\usepackage[capitalize]{cleveref}            

\usepackage[export]{adjustbox} 

\newcommand{\vect}[1]{\bm{#1}}
\newcommand{\mat}[1]{\bm{#1}}
\newcommand{\T}{^T}              

\title{
From Passive Monitoring to Active Defence: Resilient Control of Manipulators Under Cyberattacks
}

\author{Gabriele Gualandi and Alessandro V. Papadopoulos
\thanks{This work was supported by the Swedish Research Council (VR) with the PSI project No. \#2020-05094, and by the Knowledge Foundation (KKS).}
\thanks{G. Gualandi and A.V. Papadopoulos are with the Department of Computer Science and Engineering,
        M{\"a}lardalen University, V{\"a}ster{\aa}s, Sweden
        {\tt\small gabriele.gualandi@mdu.se}}%
}

\begin{document}

\definecolor{light_grey}{rgb}{0.9,0.9,0.9}
\definecolor{dark_green}{rgb}{0.35,0.7,0.2}
\definecolor{good_pink}{RGB}{255,20,147}

\definecolor{color0}{rgb}{0.125,0.125,0.875}

\definecolor{c1}{RGB}{55,65,220} 
\definecolor{c2}{RGB}{62,118,70} 
\definecolor{c3}{RGB}{228,26,28} 

\definecolor{c4}{RGB}{152,78,163} 
\definecolor{c5}{RGB}{255,127,0} 
\definecolor{c6}{RGB}{146,146,27} 
\definecolor{c7}{RGB}{0,0,0} 

\newcounter{MaxActuationvalue}
\setcounter{MaxActuationvalue}{5}

\newcounter{Nvalue}
\setcounter{Nvalue}{500}
\newcounter{ARLvalue}
\setcounter{ARLvalue}{5000}

\newcounter{SimulationDurationvalue}
\setcounter{SimulationDurationvalue}{2000}
\newcounter{AttackDurationvalue}
\setcounter{AttackDurationvalue}{1200}
\newcounter{AttackStartIterationvalue}
\setcounter{AttackStartIterationvalue}{800}


\newtheorem{lemma}{Lemma}
\newtheorem{proposition}{Proposition}
\newtheorem{corollary}{Corollary}
\newtheorem{theorem}{Theorem}
\newtheorem{assumption}{Assumption}
\newtheorem{remark}{Remark}
\newtheorem{definition}{Definition}


\newcommand{\RealNumber}{\mathbb{R}}
\newcommand{\TimeStep}{k}
\newcommand{\NumJoint}{n}
\newcommand{\TaskSpaceSize}{l}
\newcommand{\PositionSize}{v}
\newcommand{\OrientationSize}{g}

\newcommand{\NumInput}{m}
\newcommand{\ActuationSymbol}{u}
\newcommand{\ActuationSaturated}{\boldsymbol{\ActuationSymbol}}
\newcommand{\AttackedActuation}{\tilde{\ActuationSaturated}}
\newcommand{\ActuationUnsaturated}{\cancel{\ActuationSaturated}}

\newcommand{\Sat}[1]{\mathrm{sat}(#1)}
\newcommand{\ActuationSaturationMin}{\ActuationSymbol_\mathrm{min}}
\newcommand{\ActuationSaturationMax}{\ActuationSymbol_\mathrm{max}}

\newcommand{\Position}{\boldsymbol{p}}
\newcommand{\Orientation}{\boldsymbol{o}}

\newcommand{\TargetPosition}{\bar{\Position}}
\newcommand{\PositionEst}{\hat{\Position}}
\newcommand{\TargetRotationMatrix}{\bar{\mat{\RotationMatrix}}}
\newcommand{\TargetAngular}{\bar{\Orientation}}
\newcommand{\TargetAngularVelocity}{\bar{\bm{\omega}}}
\newcommand{\EstimatedAngularVelocity}{\hat{\bm{\omega}}}
\newcommand{\TargetPositionAttack}{\bar{\Position}^{\text{A}}}
\newcommand{\TargetVelocityAttack}{\dot{\bar{\Position}}^{\text{A}}}
\newcommand{\PlannedAccAttack}{\ddot{\bar{\Position}}^{\text{A}}}
\newcommand{\TargetAccAttack}{{\ddot{\Position}}^{\text{A}}}

\newcommand{\EstimatedPositionProj}{\projSymbol{\Position}}
\newcommand{\EstimatedRotationMatrix}{\hat{\mat{\RotationMatrix}}}

\newcommand{\ErrorVector}{\boldsymbol{e}}
\newcommand{\ErrorPosition}[1]{\ErrorVector_{\Position,#1}}
\newcommand{\ErrorPositionDerivative}[1]{\dot{\ErrorVector}_{\Position,#1}}
\newcommand{\ErrorOrientation}[1]{\ErrorVector_{\Orientation,#1}}
\newcommand{\ErrorAngular}[1]{\dot{\ErrorVector}_{\Orientation,#1}}

\newcommand{\GainProportionalPosition}{\bm{K}_{pp}}
\newcommand{\GainDerivativePosition}{\bm{K}_{dp}}
\newcommand{\GainProportionalOrientation}{\bm{K}_{po}}
\newcommand{\GainDerivativeOrientation}{\bm{K}_{do}}

\newcommand{\PIDActuationTask}{\ddot{\bm{x}}^{\text{c}}}
\newcommand{\ActuationPD}[1]{\ActuationSaturated_{\mathrm{PD}, #1}}

\newcommand{\AngleAxisAxis}{\hat{\boldsymbol{r}}}
\newcommand{\AngleAxisAngle}{\hat{\theta}}

\newcommand{\TaskSpaceState}{\boldsymbol{\xi}}
\newcommand{\UserSetpointTask}{\bar{\TaskSpaceState}}

\newcommand{\NumState}{h}
\newcommand{\StateVector}{\boldsymbol{x}}
\newcommand{\TrueState}{\StateVector}
\newcommand{\EstimatedState}{\hat{\StateVector}}


\newcommand{\NumOutput}{p}
\newcommand{\TrueOutput}{\boldsymbol{y}}
\newcommand{\AttackedOutput}{\tilde{\TrueOutput}}
\newcommand{\EstimatedOutput}{\hat{\TrueOutput}}

\newcommand{\Residual}{\boldsymbol{r}}

\newcommand{\ProcessNoise}{\boldsymbol{w}}
\newcommand{\ProcessNoiseCovar}{\Upsilon}
\newcommand{\MeasurementNoise}{\boldsymbol{v}}
\newcommand{\MeasurementNoiseCovar}{\Gamma}

\newcommand{\KalmanGain}{\mat{L}}

\newcommand{\desprob}{\psi}
\newcommand{\MahalanobisDistance}{z}
\newcommand{\MahalanobisWindow}{w}
\newcommand{\WindowSize}{N}
\newcommand{\ThresholdVal}{\tau}
\newcommand{\AlarmSignal}{s_{\mathrm{alarm}}}
\newcommand{\ARL}{\mathcal{A}}
\newcommand{\InverseGammaFunction}[2]{\mathcal{P}^{-1}(#1,#2)}


\newcommand{\MeasurementAttack}{\bm{a}}
\newcommand{\AttackDuration}{T}
\newcommand{\SynchDuration}{T^{\text{SYN}}}


\newcommand{\DirectionVector}{\boldsymbol{d}}
\newcommand{\EstimatedAttackDirection}{\DirectionVector}
\newcommand{\ManipulabilityMatrix}{\mathbf{M}}
\newcommand{\ManipulabilityMeasure}{\eta}
\newcommand{\CostFunction}{\mathcal{C}}
\newcommand{\GradientDescentGain}{\nu}
\newcommand{\GradientDescentGainMax}{{\GradientDescentGain}_\mathrm{max}}

\newcommand{\WeightedPIMatrix}{\mathbf{W}_\TimeStep}
\newcommand{\WeightedJacobian}{\mat{J}_{\WeightedPIMatrix}}
\newcommand{\BalanceScalar}{\alpha}
\newcommand{\ShiftFactor}{\gamma}
\newcommand{\ShiftFactorTolerance}{\epsilon}

\newcommand{\ActuationNULL}[1]{\ActuationSaturated_{\mathrm{low}, #1}}

\newcommand{\RotationMatrix}{R}
\newcommand{\GeometricJacobian}{\mathbf{J}}

\newcommand{\joint}{\vect{q}}
\newcommand{\djoint}{\dot{\joint}}

\newcommand{\jointEst}{\hat{\joint}}
\newcommand{\djointEst}{\hat{\djoint}}

\newcommand{\projSymbol}{\tilde}
\newcommand{\ActuationProjectedState}{\projSymbol{\StateVector}}
\newcommand{\ActuationProjectedJointConfiguration}{\projSymbol{\joint}}
\newcommand{\ActuationProjectedJointVelocity}[1]{\projSymbol{\djoint}_{#1}}
\newcommand{\linearizedPdd}{\mathcal{G}}
\newcommand{\simul}{\mathcal{Z}}
\newcommand{\simulLin}{\bm{Z}}

\newcommand{\GainAttackProportional}{\boldsymbol{K}^{\text{A}}_{\text{p}}}
\newcommand{\GainAttackDerivative}{\boldsymbol{K}^{\text{A}}_{\text{d}}}
\newcommand{\PredictedPosition}{\Position^{\text{SIM}}}
\newcommand{\PredictedVelocity}{\dot{\Position}^{\text{SIM}}}
\newcommand{\PredictedAcceleration}{\ddot{\Position}^{\text{SIM}}}
\newcommand{\lambdaEffort}{\lambda_{\text{e}}}
\newcommand{\baselineIn}{\bm{c}}
\newcommand{\optimAmat}{\bm{O}}
\newcommand{\optLambda}{\zeta}

\newcommand{\pinExp}{f}

\newcommand{\heightOne}{3.0cm} 
\newcommand{\heightTwo}{2.4cm} 

\newcommand{\mean}[1]{\operatorname{mean}\bigl(#1\bigr)}

\newcommand{\devmax}[2]{\operatorname{devmax}\bigl( \{ #1 \}, \{ #2 \} \bigr) }
\newcommand{\devRMS}[2]{\operatorname{devRMS}\bigl( \{ #1 \}, \{ #2 \} \bigr) }



\providecommand{\ICRAImgsPath}{imgs}

\pgfplotstableread[col sep=comma]{\ICRAImgsPath/NoDefence_NoADS_YesAttackConstants.csv}\ConstNoDefeNoADSYesAtt
\pgfplotstableread[col sep=comma]{\ICRAImgsPath/NoDefence_YesAttackConstants.csv}\ConstNoDefeYesAtt
\pgfplotstableread[col sep=comma]{\ICRAImgsPath/YesDefence_YesAttackConstants.csv}\ConstYesDefYesAtt

\pgfplotstableread[col sep=comma]{\ICRAImgsPath/NoDefence_NoADS_YesAttackStringResults.csv}%
    \ExpNoDefeNoADSYesAttStr
\pgfplotstableread[col sep=comma]{\ICRAImgsPath/NoDefence_YesAttackStringResults.csv}%
    \ExpNoDefeYesAttStr
\pgfplotstableread[col sep=comma]{\ICRAImgsPath/YesDefence_YesAttackStringResults.csv}%
    \ExpYesDefYesAttStr

\pgfplotstableread[col sep=comma]{\ICRAImgsPath/NoDefence_NoADS_YesAttackNumericResults.csv}%
    \ExpNoDefeNoADSYesAttNum
\pgfplotstableread[col sep=comma]{\ICRAImgsPath/NoDefence_YesAttackNumericResults.csv}%
    \ExpNoDefeYesAttNum
\pgfplotstableread[col sep=comma]{\ICRAImgsPath/YesDefence_YesAttackNumericResults.csv}%
    \ExpYesDefYesAttNum


\newcommand{\GetNumVal}[4]{%
  \pgfmathtruncatemacro{\row}{#4-1}%
  \pgfplotstablegetelem{\row}{#3}\of#2%
  \edef#1{\pgfplotsretval}%
}

\newcommand{\percentage}[2]{%
  \pgfmathparse{100*(#2/#1 - 1)}%
  \pgfmathprintnumber[fixed,precision=1,showpos]{\pgfmathresult}%
}


\newcommand{\csvConstNoDefeNoADSYesAtt}[2]{%
    \pgfplotstablegetelem{\numexpr#2-1}{#1}\of{\ConstNoDefeNoADSYesAtt}%
    \pgfplotsretval%
}
\newcommand{\csvNoDefeNoADSYesAtt}[2]{%
    \pgfplotstablegetelem{\numexpr#2-1}{#1}\of{\ExpNoDefeNoADSYesAttStr}%
    \pgfplotsretval%
}
\newcommand{\csvNoDefeNoADSYesAttNum}[2]{%
    \pgfplotstablegetelem{\numexpr#2-1}{#1}\of{\ExpNoDefeNoADSYesAttNum}%
    \pgfplotsretval%
}

\newcommand{\csvConstNoDefeYesAtt}[2]{%
    \pgfplotstablegetelem{\numexpr#2-1}{#1}\of{\ConstNoDefeYesAtt}%
    \pgfplotsretval%
}
\newcommand{\csvNoDefeYesAtt}[2]{%
    \pgfplotstablegetelem{\numexpr#2-1}{#1}\of{\ExpNoDefeYesAttStr}%
    \pgfplotsretval%
}
\newcommand{\csvNoDefeYesAttNum}[2]{%
    \pgfplotstablegetelem{\numexpr#2-1}{#1}\of{\ExpNoDefeYesAttNum}%
    \pgfplotsretval%
}

\newcommand{\csvConstYesDefYesAtt}[2]{%
    \pgfplotstablegetelem{\numexpr#2-1}{#1}\of{\ConstYesDefYesAtt}%
    \pgfplotsretval%
}
\newcommand{\csvYesDefYesAtt}[2]{%
    \pgfplotstablegetelem{\numexpr#2-1}{#1}\of{\ExpYesDefYesAttStr}%
    \pgfplotsretval%
}
\newcommand{\csvYesDefYesAttNum}[2]{%
    \pgfplotstablegetelem{\numexpr#2-1}{#1}\of{\ExpYesDefYesAttNum}%
    \pgfplotsretval%
}


\GetNumVal{\NoDefeNoADSYesAttFinalNormHandVelocity}{\ExpNoDefeNoADSYesAttNum}{FinalNorm_HandVelocity}{1}
\GetNumVal{\NoDefeYesAttFinalNormHandVelocity}{\ExpNoDefeYesAttNum}{FinalNorm_HandVelocity}{1}
\GetNumVal{\YesDefYesAttFinalNormHandVelocity}{\ExpYesDefYesAttNum}{FinalNorm_HandVelocity}{1}

\GetNumVal{\NoDefeNoADSYesAttFinalNormSysRealQdot}{\ExpNoDefeNoADSYesAttNum}{FinalNorm_sys_real_qdot}{1}
\GetNumVal{\NoDefeYesAttFinalNormSysRealQdot}{\ExpNoDefeYesAttNum}{FinalNorm_sys_real_qdot}{1}
\GetNumVal{\YesDefYesAttFinalNormSysRealQdot}{\ExpYesDefYesAttNum}{FinalNorm_sys_real_qdot}{1}

\GetNumVal{\NoDefeNoADSYesAttFinalNormUMax}{\ExpNoDefeNoADSYesAttNum}{FinalNorm_u_max}{1}
\GetNumVal{\NoDefeYesAttFinalNormUMax}{\ExpNoDefeYesAttNum}{FinalNorm_u_max}{1}
\GetNumVal{\YesDefYesAttFinalNormUMax}{\ExpYesDefYesAttNum}{FinalNorm_u_max}{1}

\GetNumVal{\NoDefeNoADSYesAttTs}{\ExpNoDefeNoADSYesAttNum}{Ts}{1}
\GetNumVal{\NoDefeYesAttTs}{\ExpNoDefeYesAttNum}{Ts}{1}
\GetNumVal{\YesDefYesAttTs}{\ExpYesDefYesAttNum}{Ts}{1}

\GetNumVal{\NoDefeNoADSYesAttAttChamferActToRef}{\ExpNoDefeNoADSYesAttNum}{att_chamferActToRef}{1}
\GetNumVal{\NoDefeYesAttAttChamferActToRef}{\ExpNoDefeYesAttNum}{att_chamferActToRef}{1}
\GetNumVal{\YesDefYesAttAttChamferActToRef}{\ExpYesDefYesAttNum}{att_chamferActToRef}{1}

\GetNumVal{\NoDefeNoADSYesAttAttChamferRefToAct}{\ExpNoDefeNoADSYesAttNum}{att_chamferRefToAct}{1}
\GetNumVal{\NoDefeYesAttAttChamferRefToAct}{\ExpNoDefeYesAttNum}{att_chamferRefToAct}{1}
\GetNumVal{\YesDefYesAttAttChamferRefToAct}{\ExpYesDefYesAttNum}{att_chamferRefToAct}{1}

\GetNumVal{\NoDefeNoADSYesAttAttChamferSym}{\ExpNoDefeNoADSYesAttNum}{att_chamferSym}{1}
\GetNumVal{\NoDefeYesAttAttChamferSym}{\ExpNoDefeYesAttNum}{att_chamferSym}{1}
\GetNumVal{\YesDefYesAttAttChamferSym}{\ExpYesDefYesAttNum}{att_chamferSym}{1}

\GetNumVal{\NoDefeNoADSYesAttAttFrechetDist}{\ExpNoDefeNoADSYesAttNum}{att_frechetDist}{1}
\GetNumVal{\NoDefeYesAttAttFrechetDist}{\ExpNoDefeYesAttNum}{att_frechetDist}{1}
\GetNumVal{\YesDefYesAttAttFrechetDist}{\ExpYesDefYesAttNum}{att_frechetDist}{1}

\GetNumVal{\NoDefeNoADSYesAttAttMaxError}{\ExpNoDefeNoADSYesAttNum}{att_maxError}{1}
\GetNumVal{\NoDefeYesAttAttMaxError}{\ExpNoDefeYesAttNum}{att_maxError}{1}
\GetNumVal{\YesDefYesAttAttMaxError}{\ExpYesDefYesAttNum}{att_maxError}{1}

\GetNumVal{\NoDefeNoADSYesAttAttRmsError}{\ExpNoDefeNoADSYesAttNum}{att_rmsError}{1}
\GetNumVal{\NoDefeYesAttAttRmsError}{\ExpNoDefeYesAttNum}{att_rmsError}{1}
\GetNumVal{\YesDefYesAttAttRmsError}{\ExpYesDefYesAttNum}{att_rmsError}{1}

\GetNumVal{\NoDefeNoADSYesAttAttackerDeviationHandPLTwoAvg}{\ExpNoDefeNoADSYesAttNum}{attackerDeviation_HandP_l2_avg}{1}
\GetNumVal{\NoDefeYesAttAttackerDeviationHandPLTwoAvg}{\ExpNoDefeYesAttNum}{attackerDeviation_HandP_l2_avg}{1}
\GetNumVal{\YesDefYesAttAttackerDeviationHandPLTwoAvg}{\ExpYesDefYesAttNum}{attackerDeviation_HandP_l2_avg}{1}

\GetNumVal{\NoDefeNoADSYesAttAttackerDeviationHandPLTwoMax}{\ExpNoDefeNoADSYesAttNum}{attackerDeviation_HandP_l2_max}{1}
\GetNumVal{\NoDefeYesAttAttackerDeviationHandPLTwoMax}{\ExpNoDefeYesAttNum}{attackerDeviation_HandP_l2_max}{1}
\GetNumVal{\YesDefYesAttAttackerDeviationHandPLTwoMax}{\ExpYesDefYesAttNum}{attackerDeviation_HandP_l2_max}{1}

\GetNumVal{\NoDefeNoADSYesAttAttackerDeviationHandPLTwoTot}{\ExpNoDefeNoADSYesAttNum}{attackerDeviation_HandP_l2_tot}{1}
\GetNumVal{\NoDefeYesAttAttackerDeviationHandPLTwoTot}{\ExpNoDefeYesAttNum}{attackerDeviation_HandP_l2_tot}{1}
\GetNumVal{\YesDefYesAttAttackerDeviationHandPLTwoTot}{\ExpYesDefYesAttNum}{attackerDeviation_HandP_l2_tot}{1}

\GetNumVal{\NoDefeNoADSYesAttDeviationHandPLTwoAvg}{\ExpNoDefeNoADSYesAttNum}{deviation_HandP_l2_avg}{1}
\GetNumVal{\NoDefeYesAttDeviationHandPLTwoAvg}{\ExpNoDefeYesAttNum}{deviation_HandP_l2_avg}{1}
\GetNumVal{\YesDefYesAttDeviationHandPLTwoAvg}{\ExpYesDefYesAttNum}{deviation_HandP_l2_avg}{1}

\GetNumVal{\NoDefeNoADSYesAttDeviationHandPLTwoMax}{\ExpNoDefeNoADSYesAttNum}{deviation_HandP_l2_max}{1}
\GetNumVal{\NoDefeYesAttDeviationHandPLTwoMax}{\ExpNoDefeYesAttNum}{deviation_HandP_l2_max}{1}
\GetNumVal{\YesDefYesAttDeviationHandPLTwoMax}{\ExpYesDefYesAttNum}{deviation_HandP_l2_max}{1}

\GetNumVal{\NoDefeNoADSYesAttDeviationHandPLTwoTot}{\ExpNoDefeNoADSYesAttNum}{deviation_HandP_l2_tot}{1}
\GetNumVal{\NoDefeYesAttDeviationHandPLTwoTot}{\ExpNoDefeYesAttNum}{deviation_HandP_l2_tot}{1}
\GetNumVal{\YesDefYesAttDeviationHandPLTwoTot}{\ExpYesDefYesAttNum}{deviation_HandP_l2_tot}{1}

\GetNumVal{\NoDefeNoADSYesAttDeviationXEstxRealLTwoAvg}{\ExpNoDefeNoADSYesAttNum}{deviation_xEstxReal_l2_avg}{1}
\GetNumVal{\NoDefeYesAttDeviationXEstxRealLTwoAvg}{\ExpNoDefeYesAttNum}{deviation_xEstxReal_l2_avg}{1}
\GetNumVal{\YesDefYesAttDeviationXEstxRealLTwoAvg}{\ExpYesDefYesAttNum}{deviation_xEstxReal_l2_avg}{1}

\GetNumVal{\NoDefeNoADSYesAttDeviationXEstxRealLTwoMax}{\ExpNoDefeNoADSYesAttNum}{deviation_xEstxReal_l2_max}{1}
\GetNumVal{\NoDefeYesAttDeviationXEstxRealLTwoMax}{\ExpNoDefeYesAttNum}{deviation_xEstxReal_l2_max}{1}
\GetNumVal{\YesDefYesAttDeviationXEstxRealLTwoMax}{\ExpYesDefYesAttNum}{deviation_xEstxReal_l2_max}{1}

\GetNumVal{\NoDefeNoADSYesAttDeviationXEstxRealLTwoTot}{\ExpNoDefeNoADSYesAttNum}{deviation_xEstxReal_l2_tot}{1}
\GetNumVal{\NoDefeYesAttDeviationXEstxRealLTwoTot}{\ExpNoDefeYesAttNum}{deviation_xEstxReal_l2_tot}{1}
\GetNumVal{\YesDefYesAttDeviationXEstxRealLTwoTot}{\ExpYesDefYesAttNum}{deviation_xEstxReal_l2_tot}{1}

\GetNumVal{\NoDefeNoADSYesAttDeviationXSFPxRealLTwoAvg}{\ExpNoDefeNoADSYesAttNum}{deviation_xSFPxReal_l2_avg}{1}
\GetNumVal{\NoDefeYesAttDeviationXSFPxRealLTwoAvg}{\ExpNoDefeYesAttNum}{deviation_xSFPxReal_l2_avg}{1}
\GetNumVal{\YesDefYesAttDeviationXSFPxRealLTwoAvg}{\ExpYesDefYesAttNum}{deviation_xSFPxReal_l2_avg}{1}

\GetNumVal{\NoDefeNoADSYesAttDeviationXSFPxRealLTwoMax}{\ExpNoDefeNoADSYesAttNum}{deviation_xSFPxReal_l2_max}{1}
\GetNumVal{\NoDefeYesAttDeviationXSFPxRealLTwoMax}{\ExpNoDefeYesAttNum}{deviation_xSFPxReal_l2_max}{1}
\GetNumVal{\YesDefYesAttDeviationXSFPxRealLTwoMax}{\ExpYesDefYesAttNum}{deviation_xSFPxReal_l2_max}{1}

\GetNumVal{\NoDefeNoADSYesAttDeviationXSFPxRealLTwoTot}{\ExpNoDefeNoADSYesAttNum}{deviation_xSFPxReal_l2_tot}{1}
\GetNumVal{\NoDefeYesAttDeviationXSFPxRealLTwoTot}{\ExpNoDefeYesAttNum}{deviation_xSFPxReal_l2_tot}{1}
\GetNumVal{\YesDefYesAttDeviationXSFPxRealLTwoTot}{\ExpYesDefYesAttNum}{deviation_xSFPxReal_l2_tot}{1}

\GetNumVal{\YesDefYesAttGsAvg}{\ExpYesDefYesAttNum}{gsAvg}{1}

\GetNumVal{\YesDefYesAttGsMax}{\ExpYesDefYesAttNum}{gsMax}{1}

\GetNumVal{\YesDefYesAttGsMin}{\ExpYesDefYesAttNum}{gsMin}{1}

\GetNumVal{\NoDefeNoADSYesAttHandPathLength}{\ExpNoDefeNoADSYesAttNum}{handPathLength}{1}
\GetNumVal{\NoDefeYesAttHandPathLength}{\ExpNoDefeYesAttNum}{handPathLength}{1}
\GetNumVal{\YesDefYesAttHandPathLength}{\ExpYesDefYesAttNum}{handPathLength}{1}

\GetNumVal{\NoDefeNoADSYesAttHandChamferActToRef}{\ExpNoDefeNoADSYesAttNum}{hand_chamferActToRef}{1}
\GetNumVal{\NoDefeYesAttHandChamferActToRef}{\ExpNoDefeYesAttNum}{hand_chamferActToRef}{1}
\GetNumVal{\YesDefYesAttHandChamferActToRef}{\ExpYesDefYesAttNum}{hand_chamferActToRef}{1}

\GetNumVal{\NoDefeNoADSYesAttHandChamferRefToAct}{\ExpNoDefeNoADSYesAttNum}{hand_chamferRefToAct}{1}
\GetNumVal{\NoDefeYesAttHandChamferRefToAct}{\ExpNoDefeYesAttNum}{hand_chamferRefToAct}{1}
\GetNumVal{\YesDefYesAttHandChamferRefToAct}{\ExpYesDefYesAttNum}{hand_chamferRefToAct}{1}

\GetNumVal{\NoDefeNoADSYesAttHandChamferSym}{\ExpNoDefeNoADSYesAttNum}{hand_chamferSym}{1}
\GetNumVal{\NoDefeYesAttHandChamferSym}{\ExpNoDefeYesAttNum}{hand_chamferSym}{1}
\GetNumVal{\YesDefYesAttHandChamferSym}{\ExpYesDefYesAttNum}{hand_chamferSym}{1}

\GetNumVal{\NoDefeNoADSYesAttHandFrechetDist}{\ExpNoDefeNoADSYesAttNum}{hand_frechetDist}{1}
\GetNumVal{\NoDefeYesAttHandFrechetDist}{\ExpNoDefeYesAttNum}{hand_frechetDist}{1}
\GetNumVal{\YesDefYesAttHandFrechetDist}{\ExpYesDefYesAttNum}{hand_frechetDist}{1}

\GetNumVal{\NoDefeNoADSYesAttHandLTwoAvg}{\ExpNoDefeNoADSYesAttNum}{hand_l2_avg}{1}
\GetNumVal{\NoDefeYesAttHandLTwoAvg}{\ExpNoDefeYesAttNum}{hand_l2_avg}{1}
\GetNumVal{\YesDefYesAttHandLTwoAvg}{\ExpYesDefYesAttNum}{hand_l2_avg}{1}

\GetNumVal{\NoDefeNoADSYesAttHandLTwoMax}{\ExpNoDefeNoADSYesAttNum}{hand_l2_max}{1}
\GetNumVal{\NoDefeYesAttHandLTwoMax}{\ExpNoDefeYesAttNum}{hand_l2_max}{1}
\GetNumVal{\YesDefYesAttHandLTwoMax}{\ExpYesDefYesAttNum}{hand_l2_max}{1}

\GetNumVal{\NoDefeNoADSYesAttHandLTwoTot}{\ExpNoDefeNoADSYesAttNum}{hand_l2_tot}{1}
\GetNumVal{\NoDefeYesAttHandLTwoTot}{\ExpNoDefeYesAttNum}{hand_l2_tot}{1}
\GetNumVal{\YesDefYesAttHandLTwoTot}{\ExpYesDefYesAttNum}{hand_l2_tot}{1}

\GetNumVal{\NoDefeNoADSYesAttHandMaxError}{\ExpNoDefeNoADSYesAttNum}{hand_maxError}{1}
\GetNumVal{\NoDefeYesAttHandMaxError}{\ExpNoDefeYesAttNum}{hand_maxError}{1}
\GetNumVal{\YesDefYesAttHandMaxError}{\ExpYesDefYesAttNum}{hand_maxError}{1}

\GetNumVal{\NoDefeNoADSYesAttHandRmsError}{\ExpNoDefeNoADSYesAttNum}{hand_rmsError}{1}
\GetNumVal{\NoDefeYesAttHandRmsError}{\ExpNoDefeYesAttNum}{hand_rmsError}{1}
\GetNumVal{\YesDefYesAttHandRmsError}{\ExpYesDefYesAttNum}{hand_rmsError}{1}

\GetNumVal{\NoDefeNoADSYesAttPowerPDLOneAvg}{\ExpNoDefeNoADSYesAttNum}{power_PD_l1_avg}{1}
\GetNumVal{\NoDefeYesAttPowerPDLOneAvg}{\ExpNoDefeYesAttNum}{power_PD_l1_avg}{1}
\GetNumVal{\YesDefYesAttPowerPDLOneAvg}{\ExpYesDefYesAttNum}{power_PD_l1_avg}{1}

\GetNumVal{\NoDefeNoADSYesAttPowerPDLOneMax}{\ExpNoDefeNoADSYesAttNum}{power_PD_l1_max}{1}
\GetNumVal{\NoDefeYesAttPowerPDLOneMax}{\ExpNoDefeYesAttNum}{power_PD_l1_max}{1}
\GetNumVal{\YesDefYesAttPowerPDLOneMax}{\ExpYesDefYesAttNum}{power_PD_l1_max}{1}

\GetNumVal{\NoDefeNoADSYesAttPowerPDLOneTot}{\ExpNoDefeNoADSYesAttNum}{power_PD_l1_tot}{1}
\GetNumVal{\NoDefeYesAttPowerPDLOneTot}{\ExpNoDefeYesAttNum}{power_PD_l1_tot}{1}
\GetNumVal{\YesDefYesAttPowerPDLOneTot}{\ExpYesDefYesAttNum}{power_PD_l1_tot}{1}

\GetNumVal{\NoDefeNoADSYesAttPowerFinalLOneAvg}{\ExpNoDefeNoADSYesAttNum}{power_final_l1_avg}{1}
\GetNumVal{\NoDefeYesAttPowerFinalLOneAvg}{\ExpNoDefeYesAttNum}{power_final_l1_avg}{1}
\GetNumVal{\YesDefYesAttPowerFinalLOneAvg}{\ExpYesDefYesAttNum}{power_final_l1_avg}{1}

\GetNumVal{\NoDefeNoADSYesAttPowerFinalLOneMax}{\ExpNoDefeNoADSYesAttNum}{power_final_l1_max}{1}
\GetNumVal{\NoDefeYesAttPowerFinalLOneMax}{\ExpNoDefeYesAttNum}{power_final_l1_max}{1}
\GetNumVal{\YesDefYesAttPowerFinalLOneMax}{\ExpYesDefYesAttNum}{power_final_l1_max}{1}

\GetNumVal{\NoDefeNoADSYesAttPowerFinalLOneTot}{\ExpNoDefeNoADSYesAttNum}{power_final_l1_tot}{1}
\GetNumVal{\NoDefeYesAttPowerFinalLOneTot}{\ExpNoDefeYesAttNum}{power_final_l1_tot}{1}
\GetNumVal{\YesDefYesAttPowerFinalLOneTot}{\ExpYesDefYesAttNum}{power_final_l1_tot}{1}

\GetNumVal{\NoDefeNoADSYesAttSysRealQdotLTwoAvg}{\ExpNoDefeNoADSYesAttNum}{sys_real_qdot_l2_avg}{1}
\GetNumVal{\NoDefeYesAttSysRealQdotLTwoAvg}{\ExpNoDefeYesAttNum}{sys_real_qdot_l2_avg}{1}
\GetNumVal{\YesDefYesAttSysRealQdotLTwoAvg}{\ExpYesDefYesAttNum}{sys_real_qdot_l2_avg}{1}

\GetNumVal{\NoDefeNoADSYesAttSysRealQdotLTwoMax}{\ExpNoDefeNoADSYesAttNum}{sys_real_qdot_l2_max}{1}
\GetNumVal{\NoDefeYesAttSysRealQdotLTwoMax}{\ExpNoDefeYesAttNum}{sys_real_qdot_l2_max}{1}
\GetNumVal{\YesDefYesAttSysRealQdotLTwoMax}{\ExpYesDefYesAttNum}{sys_real_qdot_l2_max}{1}

\GetNumVal{\NoDefeNoADSYesAttSysRealQdotLTwoTot}{\ExpNoDefeNoADSYesAttNum}{sys_real_qdot_l2_tot}{1}
\GetNumVal{\NoDefeYesAttSysRealQdotLTwoTot}{\ExpNoDefeYesAttNum}{sys_real_qdot_l2_tot}{1}
\GetNumVal{\YesDefYesAttSysRealQdotLTwoTot}{\ExpYesDefYesAttNum}{sys_real_qdot_l2_tot}{1}

\GetNumVal{\NoDefeNoADSYesAttUFinalLTwoAvg}{\ExpNoDefeNoADSYesAttNum}{u_final_l2_avg}{1}
\GetNumVal{\NoDefeYesAttUFinalLTwoAvg}{\ExpNoDefeYesAttNum}{u_final_l2_avg}{1}
\GetNumVal{\YesDefYesAttUFinalLTwoAvg}{\ExpYesDefYesAttNum}{u_final_l2_avg}{1}

\GetNumVal{\NoDefeNoADSYesAttUFinalLTwoTot}{\ExpNoDefeNoADSYesAttNum}{u_final_l2_tot}{1}
\GetNumVal{\NoDefeYesAttUFinalLTwoTot}{\ExpNoDefeYesAttNum}{u_final_l2_tot}{1}
\GetNumVal{\YesDefYesAttUFinalLTwoTot}{\ExpYesDefYesAttNum}{u_final_l2_tot}{1}

\maketitle

\begin{abstract}
Cyber-physical robotic systems are vulnerable to \emph{false data injection attacks} (FDIAs), in which an adversary corrupts sensor signals while evading residual-based \emph{passive} anomaly detectors such as the $\chi^2$ test. Such \emph{stealthy} attacks can induce substantial end-effector deviations without triggering alarms. This paper studies the resilience of redundant manipulators to stealthy FDIAs and advances the architecture from passive monitoring to active defence. We formulate a closed-loop model comprising a feedback-linearized manipulator, a steady-state Kalman filter, and a $\chi^2$-based anomaly detector. Building on this passive monitoring layer, we propose an active control-level defence that attenuates the control input through a monotone function of an anomaly score generated by a novel actuation-projected, measurement-free state predictor. The proposed design provides probabilistic guarantees on nominal actuation loss and preserves closed-loop stability. From the attacker perspective, we derive a convex QCQP for computing one-step optimal stealthy attacks. Simulations on a 6-DOF planar manipulator show that the proposed defence significantly reduces attack-induced end-effector deviation while preserving nominal task performance in the absence of attacks.
\end{abstract}

\section{Introduction}\label{sec:intro}
Robotic manipulators are increasingly deployed in open and networked environments, ranging from industrial assembly to collaborative human-robot interaction.  
Their tight integration of computation, communication, and actuation, however, makes them vulnerable to cyberattacks that directly compromise safety and reliability.  
Among cyber-physical threats, attacks on \emph{data integrity} are particularly critical: by corrupting sensor information, an adversary can mislead the controller and alter the robot's behaviour without any physical contact~\cite{humayed_cyber-physical_2017, sandberg_secure_2022}.  
Such threats are especially concerning when they are \emph{stealthy}, i.e., engineered to remain below the threshold of an anomaly detection system (ADS), thereby avoiding alarms while still driving the end-effector away from its intended task~\cite{guo_optimal_2017, intriago_residual-based_2024}.

A well-studied class of integrity attacks is \emph{False Data Injection Attacks} (FDIAs), in which adversaries manipulate sensor signals to achieve malicious objectives.  
In networked control and power systems, stealthy FDIAs have been extensively analyzed: from characterizations of undetectability~\cite{mo_secure_2010, fawzi_secure_2014, ueda_affine_2024} to optimal attack synthesis.  
In robotics, however, prior work has primarily focused on passive anomaly detection or on ``perfectly undetectable'' attacks that completely bypass detection~\cite{ueda_affine_2024}.  
These perspectives miss a key robotics-specific vulnerability: the widespread use of \emph{feedback linearization} reduces manipulator dynamics to double integrators, which in turn induces an \emph{integrator vulnerability}.  
As a consequence, persistent sensor corruption can silently accumulate in the closed-loop system, driving large task-space errors even under residual-based $\chi^2$ detection~\cite{tosun2025kullback}.

\textbf{This paper addresses this gap by introducing an \emph{active defence} strategy that transforms anomaly detection from passive monitoring into a resilience mechanism.}
The proposed method leverages an \emph{actuation-projected state predictor}, a model-driven state estimate that ignores sensing, to compute an anomaly score immune to direct sensor corruption.  
Based on this score, we introduce \emph{anomaly-aware command scaling}, which attenuates commanded accelerations as anomalies grow, thereby reducing the adversary's ability to steer the manipulator while preserving nominal performance.  

\paragraph*{Contributions.}
The contributions of this work are the following:
\begin{enumerate}
    \item We formalize stealthy false data injection attacks (FDIAs) against the sensors of feedback-linearized manipulators, exposing their integrator vulnerability and showing that the attacker's one-step optimal strategy reduces to a convex QCQP.
    
    \item We propose \emph{anomaly-aware command scaling}, which attenuates control inputs based on a measurement-free actuation-projected state predictor.
    
    \item We provide probabilistic guarantees on bounded attenuation in nominal operation and prove closed-loop stability under the proposed defence.
    
    \item Simulations on a 6-DOF redundant manipulator demonstrate that the defence significantly limits attacker-realizable end-effector deviations while preserving task performance in the absence of attacks.
\end{enumerate}
\subsection{Related Work}\label{sec:related_work}

\textbf{Anomaly detection in CPS.}  
Residual-based detectors, particularly $\chi^2$ tests on Kalman innovations, are a classical tool for monitoring CPS integrity \cite{ding_survey_2018}.  
These methods provide statistical guarantees on false-alarm rates and are widely used in industrial practice.  
Extensions include adaptive thresholds \cite{r_tuning_2018} and sequential schemes such as CUSUM tests \cite{murguia_cusum_2016}.  
However, such ADS are inherently \emph{passive}: they detect anomalies but do not alter the control policy, leaving the system structure unchanged.  

\textbf{False Data Injection Attacks.}  
Theoretical studies of FDIAs in networked systems have characterized undetectability conditions \cite{mo_secure_2010, fawzi_secure_2014}, affine attack structures \cite{ueda_affine_2024}, and optimal attack strategies \cite{guo_optimal_2017}.  
These works typically assume either ``perfectly undetectable'' attacks (no residual information leaks) or focus on power networks and generic LTI plants.  
Robotics-specific studies remain limited: most works concentrate on detector design rather than on modifying the controller to actively limit attack effectiveness \cite{intriago_residual-based_2024}.

\textbf{Integrator Vulnerability.}
Control systems with an LTI plant having integral action are subject to integrator vulnerability~\cite{tosun2025kullback}, where the residual generated by any linear observer follows the same distribution during normal operation and under attack in the steady-state regime, making sensor bias injection attacks detectable only during transients.

\textbf{Our contribution.}
We bridge these lines of research by combining a residual-based ADS with a novel active defence operating at the control level.
First, we show that feedback linearization induces an \emph{integrator vulnerability}: a PD-based FDIA can precisely steer the end-effector despite a residual-based $\chi^2$ detector.
We then show that our gain-scaling defence, driven by the anomaly score, reduces the attack’s effectiveness while providing probabilistic guarantees in attack-free operation, thereby improving the cyber-physical security of robotic manipulators.

\section{System Model}\label{sec:system_model}
We consider a robotic manipulator operating in closed loop with a state estimator, a task-space controller, and an anomaly detection system (ADS).  
The architecture, illustrated in Fig.~\ref{fig:control_attack}, captures the interaction between defender and adversary: (i) the plant dynamics, (ii) a Kalman filter for state estimation, (iii) task-space control, (iv) a residual-based ADS, and (v) an additive adversarial attack on sensor measurements.

\subsection{Closed-Loop Architecture}
At each discrete time step $k \in \mathbb{Z}_{\ge 0}$, the plant output $\bm{y}_k \in \mathbb{R}^p$ is corrupted by an injected signal $\bm{a}_k$ to yield the attacked measurement
\begin{equation}
    \tilde{\bm{y}}_k = \bm{y}_k + \bm{a}_k,
    \label{eq:measurement_model}
\end{equation}

\begin{figure}[t]
    \centering
    \vspace{0.6em}
    \adjustbox{max width=\columnwidth, max height=\textheight}{\includegraphics{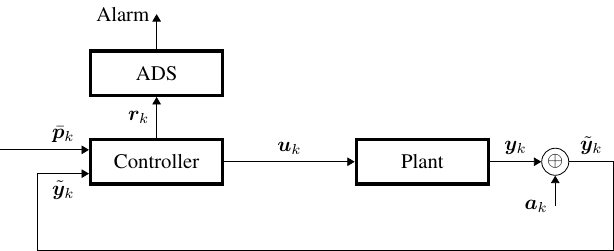}}
    \caption{Closed-loop system under FDIA. Sensor signals are corrupted by injection $\bm{a}_k$, yielding attacked output $\tilde{\bm{y}}_k$. The Kalman filter generates the innovation $\bm{r}_k$, which is monitored by the ADS.}
    \label{fig:control_attack}
\end{figure}

\subsection{Plant and State Estimator}
The plant is modeled as a discrete-time LTI system:
\begin{subequations}\label{eq:lti_plant}
\begin{align}
    \bm{x}_{k+1} &= \bm{A}\bm{x}_k + \bm{B}\bm{u}_k + \bm{w}_k, \\
    \bm{y}_k &= \bm{C}\bm{x}_k + \bm{v}_k,
\end{align}
\end{subequations}
with state $\bm{x}_k \in \mathbb{R}^{n}$, process noise 
$\ProcessNoise_k \sim \mathcal{N}(\bm{0},\bm{Q})$, and measurement noise 
$\MeasurementNoise_k \sim \mathcal{N}(\bm{0},\bm{R})$.
We assume $(\bm{A},\bm{C})$ is detectable,
ensuring the existence of a unique stabilizing solution $\bm{P}$ to the DARE
\begin{equation*}
    \bm{P} = \bm{A}\bm{P}\bm{A}^\top + \bm{Q}
    - \bm{A}\bm{P}\bm{C}^\top\!\left(\bm{C}\bm{P}\bm{C}^\top + \bm{R}\right)^{-1}\!
    \bm{C}\bm{P}\bm{A}^\top .
\end{equation*}

State estimates are obtained from a steady-state Kalman filter in single-step innovation form:
\begin{subequations}
\begin{align}
    \hat{\bm{x}}_{k+1} &= \bm{A}\hat{\bm{x}}_{k} + \bm{B}\bm{u}_k + \KalmanGain\,\bm{r}_k, \label{eq:kalmanX}  \\
    \bm{r}_k &= \tilde{\bm{y}}_{k} - \bm{C}\hat{\bm{x}}_{k}, \label{eq:kalmanR}
\end{align}
\end{subequations}
where $\bm{r}_k \in \mathbb{R}^{p}$ is the innovation. 
Its steady-state covariance is 
{${\bm{\Sigma} = \bm{C}\bm{P}\bm{C}^\top + \bm{R}}$}, and the corresponding 
steady-state gain is
\begin{equation}
    \KalmanGain = \bm{A}\,\bm{P}\,\bm{C}^\top\,\bm{\Sigma}^{-1}.
\end{equation}
The residual $\bm{r}_k$ and its covariance $\bm{\Sigma}$ will later be used 
for residual-based anomaly detection.

\subsection{Manipulator Model and Task-Space Control}
We consider a $p$-DOF robotic manipulator with sensor readings providing joint values $\bm{q}$, and having state
{${\bm{x} = [\bm{q}^\top \; \dot{\bm{q}}^\top]^\top \in \mathbb{R}^{n}}$}, where $n=2 \cdot p$ and $\bm{q}\in\mathbb{R}^{p}$. 

The continuous-time joint-space dynamics are
\begin{equation}
    \bm{M}(\bm{q})\,\ddot{\bm{q}} + \bm{\nu}(\bm{q},\dot{\bm{q}}) = \bm{\tau},
    \label{eq:manipulator_dynamics}
\end{equation}
where $\bm{M}(\bm{q})\succ 0$ is the inertia matrix and $\bm{\nu}$ collects Coriolis, centrifugal, and gravity terms. With inverse-dynamics compensation, i.e., $\bm{\tau}=\bm{M}(\bm{q})\,\bm{u}^{\text{nom}}+\bm{\nu}(\bm{q},\dot{\bm{q}})$, the joint dynamics reduce to decoupled double integrators,
\begin{equation}\label{eq:idealDynamics}
    \ddot{\bm{q}} = \bm{u}^{\text{nom}},
\end{equation}
whose discrete-time form is represented by \cref{eq:lti_plant} with {${\bm{x}_k=[\bm{q}_k^\top \; \dot{\bm{q}}_k^\top]^\top}$} and input $\bm{u}_k=\bm{u}^{\text{nom}}_k$.

The controller operates in task space using twist control \cite{murray_mathematical_1994,siciliano_robotics_2010}, tracking position references $\{\TargetPosition_k,\dot{\TargetPosition}_k,\ddot{\TargetPosition}_k\}$ and orientation references $\{\TargetRotationMatrix_k,\TargetAngularVelocity_k,\dot{\TargetAngularVelocity}_k\}$, where $\TargetRotationMatrix_k\in SO(3)$ and $\TargetAngularVelocity_k\in\mathbb{R}^3$. 

Position and orientation errors are defined as
\begin{align}
    \ErrorPosition{k} &= \TargetPosition_k - \PositionEst_k, \label{eq:position_error}\\
    \ErrorOrientation{k} &= \sin\!\left(\tfrac{\AngleAxisAngle_k}{2}\right)\,\AngleAxisAxis_k, \label{eq:orientation_error}
\end{align}
where $(\AngleAxisAngle_k,\AngleAxisAxis_k)$ is the angle–axis pair of the rotation error $\TargetRotationMatrix_k\,\EstimatedRotationMatrix_k^\top$ with $\AngleAxisAngle_k\in[0,\pi]$.
The estimates $\PositionEst_k$ and $\EstimatedRotationMatrix_k$ are obtained via forward kinematics from $\hat{\bm{q}}_k$ extracted from the state estimate $\hat{\bm{x}}_k$.


A PD+feedforward law generates desired task accelerations:
\begin{equation}
    \label{eq:pid_actuation_task}
    \bm{u}^{\text{c}}_{k} = 
    \begin{bmatrix}
        \ddot{\TargetPosition}_k 
            + \GainProportionalPosition \,\ErrorPosition{k} 
            + \GainDerivativePosition \,\ErrorPositionDerivative{k} \\[6pt]
        \dot{\TargetAngularVelocity}_k 
            + \GainProportionalOrientation \,\ErrorOrientation{k} 
            + \GainDerivativeOrientation \,\ErrorAngular{k}
    \end{bmatrix},
\end{equation}
where $\ErrorPositionDerivative{k} \!\coloneqq\! \dot{\TargetPosition}_k - \dot{\PositionEst}_k$ and 
$\ErrorAngular{k} \!\coloneqq\! \TargetAngularVelocity_k - \EstimatedAngularVelocity_k$ are velocity errors. 
The gains are synthesized via discrete-time LQR on the discretized double-integrator model \eqref{eq:idealDynamics}.

Joint accelerations are then computed via the pseudoinverse of the geometric Jacobian $\bm{J}(\bm{q})\in\mathbb{R}^{6\times p}$~\cite{siciliano_robotics_2010}:
\newcommand{\pidActuationJoint}
 {\bm{J}^\dagger(\bm{q})\big( \bm{u}^{\text{c}}_{k} - \dot{\bm{J}}(\bm{q},\dot{\bm{q}})\dot{\bm{q}}\big)
 }
\begin{equation}
    \label{eq:pid_actuation_joint}
    \bm{u}^{\text{nom}}_k = \pidActuationJoint.
\end{equation}

\subsection{Passive Anomaly Detection System}\label{subsec:ADS}
The ADS monitors the innovation sequence $\bm{r}_k$. 
With steady-state covariance $\bm{\Sigma}\succ 0$ (hence invertible), the Mahalanobis distance
\begin{equation}
    z_k = \bm{r}_k^\top \bm{\Sigma}^{-1} \bm{r}_k
    \label{eq:mahalanobis_distance}
\end{equation}
is $\chi^2(p)$-distributed under $\mathcal{H}_0$ (no attack), where {${p=\dim(\bm{y}_k)}$}.
To reduce sensitivity to single-sample fluctuations, we employ a windowed statistic of length $W\in\mathbb{N}$:
\begin{equation}
    w_k = \sum_{i=k-W+1}^k z_i.
    \label{eq:mahalanobis_window}
\end{equation}
During execution, an alarm is triggered if $w_k > \tau$.

\begin{lemma}[Innovation whiteness]\label{lem:innovation}
Under $\mathcal{H}_0$ (no attack), the innovation process satisfies
\[
\bm{r}_k \sim \mathcal{N}(\bm{0},\bm{\Sigma}), \qquad \bm{r}_k \ \text{independent across } k.
\]
\end{lemma}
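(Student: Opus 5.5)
The plan is to work with the estimation error $\bm{e}_k \coloneqq \bm{x}_k - \hat{\bm{x}}_k$ rather than with the closed loop directly. Under $\mathcal{H}_0$ we have $\bm{a}_k=\bm{0}$, so $\tilde{\bm{y}}_k=\bm{y}_k$. Subtracting \eqref{eq:kalmanX} from \eqref{eq:lti_plant} gives
\begin{equation*}
\bm{e}_{k+1} = (\bm{A}-\KalmanGain\bm{C})\bm{e}_k + \bm{w}_k - \KalmanGain\bm{v}_k, \qquad \bm{r}_k = \bm{C}\bm{e}_k + \bm{v}_k .
\end{equation*}
The key observation is that $\bm{u}_k$ cancels. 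Whatever the controller does, including nonlinear feedback through \eqref{eq:pid_actuation_task}--\eqref{eq:pid_actuation_joint}, it does not enter the error dynamics. So the innovation is a linear Gaussian functional of $(\bm{e}_0,\{\bm{w}_j\},\{\bm{v}_j\})$ alone. I would state the standard assumptions explicitly: $\bm{w}_k,\bm{v}_k$ are mutually independent white Gaussian sequences, independent of $\bm{e}_0$, and $\bm{e}_0\sim\mathcal{N}(\bm{0},\bm{P})$. The last condition is the steady-state regime implicit in using the constant gain $\KalmanGain$.

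Next I would show by induction that $\operatorname{Cov}(\bm{e}_k)=\bm{P}$ for all $k$. Expanding the Lyapunov recursion gives $(\bm{A}-\KalmanGain\bm{C})\bm{P}(\bm{A}-\KalmanGain\bm{C})^\top+\bm{Q}+\KalmanGain\bm{R}\KalmanGain^\top$. Substituting $\KalmanGain=\bm{A}\bm{P}\bm{C}^\top\bm{\Sigma}^{-1}$ and collecting terms using $\bm{\Sigma}=\bm{C}\bm{P}\bm{C}^\top+\bm{R}$, this equals the DARE right-hand side, hence $\bm{P}$. Zero mean propagates trivially, and joint Gaussianity holds because everything is linear in Gaussian primitives. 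Then $\bm{r}_k$ is Gaussian with mean zero and covariance $\bm{C}\bm{P}\bm{C}^\top+\bm{R}=\bm{\Sigma}$, using that $\bm{v}_k$ is independent of $\bm{e}_k$.

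For independence across time it suffices, by joint Gaussianity, to show $\mathbb{E}[\bm{r}_{k+j}\bm{r}_k^\top]=\bm{0}$ for $j\ge1$. Write $\bm{e}_{k+j}=(\bm{A}-\KalmanGain\bm{C})^{j}\bm{e}_k+\text{(noise after }k)$, where the terms containing $\bm{v}_k$ appear as $-(\bm{A}-\KalmanGain\bm{C})^{j-1}\KalmanGain\bm{v}_k$. Then
\begin{equation*}
\mathbb{E}[\bm{r}_{k+j}\bm{r}_k^\top]=\bm{C}(\bm{A}-\KalmanGain\bm{C})^{j-1}\bigl[(\bm{A}-\KalmanGain\bm{C})\bm{P}\bm{C}^\top-\KalmanGain\bm{R}\bigr].
\end{equation*}
The bracket equals $\bm{A}\bm{P}\bm{C}^\top-\KalmanGain\bm{\Sigma}=\bm{0}$ by the definition of $\KalmanGain$.

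The main obstacle is bookkeeping rather than any deep step. First, the time-invariant gain only yields exact stationarity if $\bm{e}_0$ has covariance $\bm{P}$. Otherwise the claim holds only asymptotically, with convergence guaranteed by the stabilizing DARE solution, i.e.\ $\bm{A}-\KalmanGain\bm{C}$ Schur. I will state the result in the steady-state sense. Second, I must make sure the feedback nonlinearity does not break Gaussianity. The cancellation of $\bm{u}_k$ in the error dynamics handles this, provided $\bm{u}_k$ is applied identically in the plant and in the filter, which is the modeling assumption of \eqref{eq:lti_plant} and \eqref{eq:kalmanX}.
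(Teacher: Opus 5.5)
Your proof is correct. It differs from the paper's only in that the paper gives no derivation: it invokes innovation whiteness as a standard Kalman-filter property and cites the survey literature. You instead carry out the textbook argument, and all three computations check out:
\begin{itemize}
\item the error recursion $\bm{e}_{k+1}=(\bm{A}-\KalmanGain\bm{C})\bm{e}_k+\bm{w}_k-\KalmanGain\bm{v}_k$ with $\bm{r}_k=\bm{C}\bm{e}_k+\bm{v}_k$;
\item stationarity $\operatorname{Cov}(\bm{e}_k)=\bm{P}$, obtained by reducing the Lyapunov recursion to the DARE through $\KalmanGain\bm{\Sigma}\KalmanGain^\top=\bm{A}\bm{P}\bm{C}^\top\bm{\Sigma}^{-1}\bm{C}\bm{P}\bm{A}^\top$;
\item the vanishing lagged covariance, since the bracket collapses to $\bm{A}\bm{P}\bm{C}^\top-\KalmanGain\bm{\Sigma}=\bm{0}$.
\end{itemize}
Joint Gaussianity then upgrades uncorrelatedness to mutual independence.

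Your route buys two things the citation leaves implicit. First, it makes the hypotheses explicit. Exact, not merely asymptotic, validity needs $\bm{e}_0\sim\mathcal{N}(\bm{0},\bm{P})$ independent of the noise sequences. That is the converged-estimator condition of Assumption~\ref{ass:converged}, which the paper only introduces after this lemma. It also needs $\bm{w}_k$ and $\bm{v}_k$ to be mutually independent.

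Second, and more relevant to this paper, your observation that $\bm{u}_k$ cancels from the error dynamics shows why the classical result survives the nonlinear causal feedback used here. That feedback includes the Jacobian pseudoinverse evaluated at $\hat{\bm{q}}_k$ and the scaling $f(\tilde z_k)$, under which $\bm{x}_k$ itself is generally non-Gaussian, while $\bm{e}_k$ and $\bm{r}_k$ remain linear in Gaussian primitives. The paper's one-line citation buys only brevity.
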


\begin{proof}
This is a standard property of the Kalman filter with correct noise statistics: the innovation sequence is white, zero-mean Gaussian with covariance $\bm{\Sigma}$, see \cite{ding_survey_2018}.
\end{proof}

\begin{lemma}[Chi-squared distribution]\label{thm:chi2}
Under $\mathcal{H}_0$, the statistics satisfy
\[
z_k \sim \chi^2(p), \qquad w_k \sim \chi^2(pW),
\]
where $p$ is the output dimension and $W$ the window length.
\end{lemma}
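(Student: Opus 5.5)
The plan is to reduce both claims to Lemma~\ref{lem:innovation} through a whitening transformation. First I will deal with $z_k$. Since $\bm{\Sigma}\succ 0$, it has a unique symmetric positive definite square root $\bm{\Sigma}^{1/2}$, and I set
\[
\bm{\epsilon}_k \coloneqq \bm{\Sigma}^{-1/2}\bm{r}_k .
\]
Under $\mathcal{H}_0$, Lemma~\ref{lem:innovation} gives $\bm{r}_k\sim\mathcal{N}(\bm{0},\bm{\Sigma})$. A linear map of a Gaussian vector is Gaussian, so $\bm{\epsilon}_k\sim\mathcal{N}(\bm{0},\bm{\Sigma}^{-1/2}\bm{\Sigma}\bm{\Sigma}^{-1/2})=\mathcal{N}(\bm{0},\bm{I}_p)$. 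Its $p$ components are jointly Gaussian and uncorrelated, hence i.i.d.\ standard normal. Then
\[
z_k=\bm{r}_k^\top\bm{\Sigma}^{-1}\bm{r}_k=\bm{\epsilon}_k^\top\bm{\epsilon}_k=\sum_{j=1}^p \epsilon_{k,j}^2 .
\]
By the definition of the chi-squared law, this gives $z_k\sim\chi^2(p)$.

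Next I will handle the window statistic $w_k$. Lemma~\ref{lem:innovation} also states that the $\bm{r}_k$ are independent across $k$. Measurable functions of independent random vectors are independent, so the vectors $\bm{\epsilon}_{k-W+1},\dots,\bm{\epsilon}_k$ are mutually independent. Stacking them gives a vector in $\mathbb{R}^{pW}$ whose $pW$ scalar entries are i.i.d.\ $\mathcal{N}(0,1)$. Therefore
\[
w_k=\sum_{i=k-W+1}^k z_i
\]
is a sum of $pW$ squared independent standard normals, and $w_k\sim\chi^2(pW)$. Equivalently, one can invoke the additivity of independent chi-squared variables.

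The only real subtlety is the hypothesis behind Lemma~\ref{lem:innovation}. Whiteness with covariance exactly $\bm{\Sigma}$ holds only in steady state, that is, when the filter is initialized with the stationary error covariance $\bm{P}$ and the noise statistics are correct. For the window statistic, all $W$ samples must also lie in this regime, so $k\ge W-1$ after stationarity has been reached. I will state this explicitly as part of invoking Lemma~\ref{lem:innovation}. Otherwise the result holds only asymptotically, and I would then add a remark that the distributions converge as $k\to\infty$ because the filter error covariance converges to $\bm{P}$.
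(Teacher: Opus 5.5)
Your proposal is correct and follows essentially the same route as the paper. The paper whitens with $\bm{\Sigma}^{-1/2}$ to get $z_k=\|\bm{\Sigma}^{-1/2}\bm{r}_k\|_2^2\sim\chi^2(p)$, then uses independence across time to write $w_k$ as a sum of $W$ i.i.d.\ $\chi^2(p)$ variables. Your steady-state caveat is already covered in the paper by Assumption~\ref{ass:converged}.
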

\begin{proof}
From Lemma~\ref{lem:innovation}, $\bm{\Sigma}^{-1/2} \bm{r}_k \sim \mathcal{N}(\bm{0},\bm{I}_p)$.  
Thus $z_k = \| \bm{\Sigma}^{-1/2} {\bm{r}}_k\|_2^2 \sim \chi^2(p)$ \cite{gallego_mahalanobis_2013}.  
Independence across time implies $w_k$ is the sum of $W$ i.i.d. $\chi^2(p)$ variables, hence $\chi^2(pW)$, see \cite{r_tuning_2018}.
\end{proof}

\begin{corollary}[Threshold calibration]\label{cor:threshold}
Given a desired per-step false-alarm probability $\alpha\in(0,1)$,
set
\begin{equation}
\tau \;=\; F^{-1}_{\chi^2(pW)}(1-\alpha)
\;=\; 2\, \mathrm{P}^{-1}\!\left(1-\alpha,\tfrac{pW}{2}\right),
\label{eq:desired_arl}
\end{equation}
which ensures $\Pr(w_k > \tau \mid \mathcal{H}_0)=\alpha$, where $\mathrm{P}^{-1}$ denotes the inverse of the regularized lower incomplete gamma function \cite{r_tuning_2018}.
\end{corollary}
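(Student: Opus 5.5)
The plan is to read the threshold calibration directly off Lemma~\ref{thm:chi2}. Under $\mathcal{H}_0$ that lemma gives $w_k \sim \chi^2(pW)$ for every $k$ with a full window, i.e.\ $k \ge W-1$. Let $F_{\chi^2(pW)}$ denote its cumulative distribution function. Then
\[
\Pr(w_k > \tau \mid \mathcal{H}_0) = 1 - F_{\chi^2(pW)}(\tau).
\]
So the proof reduces to two things: showing that the stated $\tau$ solves $F_{\chi^2(pW)}(\tau) = 1-\alpha$, and showing that this solution is unique. Uniqueness is what justifies writing $F^{-1}$.

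\textbf{Step 1: invertibility of the CDF.} The $\chi^2(pW)$ law has density
\[
f(x) \propto x^{pW/2-1} e^{-x/2}, \qquad x>0.
\]
This density is strictly positive on $(0,\infty)$, so the CDF is continuous and strictly increasing from $0$ to $1$ there. For every $\alpha\in(0,1)$ there is therefore exactly one $\tau>0$ with $F(\tau)=1-\alpha$. Continuity also means the strict and non-strict inequalities give the same probability, since there are no atoms. Substituting back yields $\Pr(w_k>\tau\mid\mathcal{H}_0)=\alpha$ with equality, not merely a bound.

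\textbf{Step 2: the incomplete-gamma form.} Recall that $\chi^2(\nu)$ is the Gamma distribution with shape $\nu/2$ and scale $2$. Its CDF is therefore
\[
F_{\chi^2(\nu)}(x) = \mathrm{P}\bigl(\tfrac{\nu}{2}, \tfrac{x}{2}\bigr),
\]
where $\mathrm{P}$ is the regularized lower incomplete gamma function. For fixed shape, $\mathrm{P}$ is strictly increasing in its second argument, so it can be inverted there. Setting $\nu = pW$ and solving $\mathrm{P}(pW/2, \tau/2) = 1-\alpha$ gives $\tau/2 = \mathrm{P}^{-1}(1-\alpha, pW/2)$, which is the second expression in \eqref{eq:desired_arl}. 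The argument order follows the paper's notation.

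\textbf{Main obstacle.} No single step is hard; the delicate point is hypothesis bookkeeping. The distributional claim rests on Lemma~\ref{lem:innovation} holding at steady state. That requires the filter to be initialized at the DARE solution $\bm{P}$ with correct noise statistics, and the window to be full. I would state explicitly that the guarantee is per-step and marginal. The statistics $w_k$ overlap across consecutive $k$, so they are correlated, and no claim is made about joint or run-length false-alarm behaviour.
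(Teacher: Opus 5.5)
Your proposal is correct and follows the paper's route: the paper gives no separate proof and treats the corollary as an immediate consequence of Lemma~\ref{thm:chi2}, taking the $(1-\alpha)$-quantile of the $\chi^2(pW)$ law and obtaining the incomplete-gamma form from the representation of $\chi^2(pW)$ as a Gamma law with shape $pW/2$ and scale $2$, exactly as you do. Your additional points are sound and go slightly beyond what the paper states: strict monotonicity and continuity of the CDF (which make the inverse well defined and give equality rather than a bound), and the caveat that the guarantee is per-step and marginal because overlapping windows are correlated.
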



\subsection{Attacker Model}\label{subsec:assumptions}
We now state the assumptions under which both the defence and attack strategies are developed.

\begin{assumption}[Adversary knowledge]\label{ass:omniscience}
The adversary is \emph{omniscient}: it has full knowledge of the plant, controller, state estimator, and defence system(s).
\end{assumption}

\begin{assumption}[Adversary capabilities]\label{ass:capab}
The only attack surface is additive injection into sensor measurements. 
The adversary cannot tamper with actuators, control logic, ADS parameters, timing, or packet ordering.
\end{assumption}

\begin{assumption}[Adversary goal]\label{ass:goal}
The adversary’s goal is to manipulate the position of the end-effector in task space, as formalized later by an attack objective on the task-space trajectory, subject to the detection mechanism.
\end{assumption}

\begin{assumption}[Converged estimator]\label{ass:converged}
Without loss of generality, the attack starts at $k=0$ and lasts at most $\AttackDuration$ samples. 
At $k=0$, the Kalman filter operates at steady state (covariances and gains constant), i.e., the estimator has converged.
\end{assumption}

\section{Proposed Defence Method}\label{sec:defence}

We now present a defence strategy based on gain scaling to reduce the impact of sensor FDIAs, with probabilistic guarantees on the performance loss under attack-free operation.
\subsection{Attack Estimation}
We define the \emph{actuation-projected state} $\ActuationProjectedState_{k}$ as the measurement-free state prediction driven solely by commanded inputs. 
To mitigate drift from model mismatch, this predictor is periodically re-synchronized with the Kalman estimate as ground truth. 
Without loss of generality, assume the most recent re-synchronization occurs at $k=0$, so
\begin{equation}
    \ActuationProjectedState_{0} = \hat{\bm{x}}_{0}.
    \label{eq:resync}
\end{equation}
For subsequent steps, the actuation-projected state follows the noise-free, open-loop predictor
\begin{equation}
    \ActuationProjectedState_{k+1} 
    = \bm{A}\,\ActuationProjectedState_{k} + \bm{B}\,\bm{u}_{k}, \qquad k \ge 0.
    \label{eq:actuation_projected_state}
\end{equation}
We define the \emph{actuation-projected residual} (distinct from the innovation $\bm{r}_k$) as
\begin{equation}
    \tilde{\bm{r}}_{k} \;\coloneqq\; \hat{\bm{x}}_{k} - \ActuationProjectedState_{k} \in \mathbb{R}^{n},   \qquad n = 2 p.
    \label{eq:residualProj}
\end{equation}

\begin{theorem}[Actuation-projected residual covariance]
	\label{thm:residual_cov}
The covariance of the residual in \cref{eq:residualProj} after 
$k$ steps from initialization, $\bm{\Sigma}_{\tilde{r}, k} \in \mathbb{R}^{n \times n}$, is the $(2,2)$ block of the matrix $\bm{P}_{z,k} \in \mathbb{R}^{2n \times 2n}$ obtained 
by iterating
	\begin{equation}\label{eq:covRecurs}
		\bm{P}_{z,j+1} = \bm{F} \bm{P}_{z,j} \bm{F}^\top + \bm{\Pi}, \quad j=0, \dots, k-1,
	\end{equation}
from the initial condition $\bm{P}_{z,0} = \operatorname{diag}(\bm{P}, \bm{0})$, where
\begin{align}
    \label{eq:covMatF}\bm{F} = \begin{bmatrix} \bm{A} - \KalmanGain \bm{C} & \bm{0} \\ \KalmanGain \bm{C} & \bm{A} \end{bmatrix} \in \mathbb{R}^{2n \times 2n}, \\ 
    \label{eq:covMatG} \bm{G} = \begin{bmatrix} \bm{I} & -\KalmanGain \\ \bm{0} & \KalmanGain \end{bmatrix} \in \mathbb{R}^{2n\times (n+p)}, \\ 
    \label{eq:covMatPi} \bm{\Pi} = \bm{G}\, \operatorname{diag}(\bm{Q}, \bm{R}) \bm{G}^\top \in \mathbb{R}^{2n \times 2n}. 
\end{align}
\end{theorem}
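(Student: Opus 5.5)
We prove the statement by building an augmented error system whose state stacks the Kalman estimation error and the actuation-projected residual. The aim is to show it is driven only by the white noises $\bm{w}_k,\bm{v}_k$ through $\bm{F}$ and $\bm{G}$.

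Define the estimation error $\bm{e}_k \coloneqq \bm{x}_k - \hat{\bm{x}}_k$ and the augmented vector $\bm{\zeta}_k \coloneqq [\bm{e}_k^\top \;\; \tilde{\bm{r}}_k^\top]^\top \in \mathbb{R}^{2n}$. We work under $\mathcal{H}_0$, i.e.\ $\bm{a}_k=\bm{0}$, since the covariance is meant to calibrate the nominal behaviour.

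\textbf{Step 1: error recursion.} From \eqref{eq:lti_plant}, \eqref{eq:kalmanX} and \eqref{eq:kalmanR}, the innovation is $\bm{r}_k = \bm{C}\bm{e}_k + \bm{v}_k$. The input $\bm{B}\bm{u}_k$ cancels in the difference, which gives
\[
\bm{e}_{k+1} = (\bm{A}-\KalmanGain\bm{C})\bm{e}_k + \bm{w}_k - \KalmanGain\bm{v}_k .
\]

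\textbf{Step 2: residual recursion.} Subtract \eqref{eq:actuation_projected_state} from \eqref{eq:kalmanX}. The inputs cancel again, because both predictors use the same commanded $\bm{u}_k$, so
\[
\tilde{\bm{r}}_{k+1} = \bm{A}\tilde{\bm{r}}_k + \KalmanGain\bm{C}\bm{e}_k + \KalmanGain\bm{v}_k .
\]
Stacking the two recursions yields exactly
\[
\bm{\zeta}_{k+1} = \bm{F}\bm{\zeta}_k + \bm{G}\,[\bm{w}_k^\top\;\bm{v}_k^\top]^\top,
\]
with $\bm{F}$ and $\bm{G}$ as in \eqref{eq:covMatF} and \eqref{eq:covMatG}. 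This identification is a direct block-matrix check.

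\textbf{Step 3: covariance propagation.} Taking second moments gives $\mathbb{E}[\bm{\zeta}_{k+1}\bm{\zeta}_{k+1}^\top] = \bm{F}\bm{P}_{z,k}\bm{F}^\top + \bm{\Pi}$, provided the cross terms vanish. This needs $(\bm{w}_k,\bm{v}_k)$ to be independent of $\bm{\zeta}_k$. That holds because $\bm{\zeta}_k$ is a function only of $\bm{\zeta}_0$ and of $\bm{w}_j,\bm{v}_j$ for $j<k$, and the noises are white and mutually independent with block-diagonal covariance $\operatorname{diag}(\bm{Q},\bm{R})$.

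For the initial condition, \eqref{eq:resync} gives $\tilde{\bm{r}}_0=\bm{0}$. By Assumption~\ref{ass:converged} the filter is in steady state, so $\operatorname{Cov}(\bm{e}_0)=\bm{P}$ and $\bm{P}_{z,0}=\operatorname{diag}(\bm{P},\bm{0})$.

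For zero mean, $\mathbb{E}[\bm{e}_0]=\bm{0}$ (the filter is unbiased), $\tilde{\bm{r}}_0=\bm{0}$, and the noises are zero mean. Hence $\mathbb{E}[\bm{\zeta}_k]=\bm{0}$ for all $k$, so second moments are covariances. Reading off the $(2,2)$ block of $\bm{P}_{z,k}$ then gives $\bm{\Sigma}_{\tilde r,k}$.

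\textbf{Main obstacle.} The delicate point is the modelling convention in the initial condition and the independence argument. The Kalman covariance $\bm{P}$ solving the DARE is the \emph{a priori} (one-step prediction) error covariance, and it matches the predictor form \eqref{eq:kalmanX} in which $\hat{\bm{x}}_k$ is the prior estimate. We must verify this consistency, so that $\bm{e}_0$ has covariance $\bm{P}$ rather than the filtered covariance.

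We must also confirm that $\bm{u}_k$, although it is a feedback of $\hat{\bm{x}}_k$ through the nonlinear laws \eqref{eq:pid_actuation_task} and \eqref{eq:pid_actuation_joint}, enters both $\hat{\bm{x}}$ and $\tilde{\bm{x}}$ identically and therefore drops out. This makes the error dynamics linear and input-independent even with nonlinear control, and it is what justifies the purely linear propagation.
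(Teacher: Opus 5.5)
Your proposal is correct and follows essentially the same route as the paper. It uses the same error and residual recursions, stacks them into $\bm{F},\bm{G}$, kills the cross terms by independence of $(\bm{w}_k,\bm{v}_k)$ from the augmented state, and takes the initial condition $\operatorname{diag}(\bm{P},\bm{0})$ from synchronization and the steady-state filter. Your zero-mean argument and the a priori consistency check are small refinements the paper leaves implicit; the check does go through, since $(\bm{A}-\KalmanGain\bm{C})\bm{P}(\bm{A}-\KalmanGain\bm{C})^\top+\bm{Q}+\KalmanGain\bm{R}\KalmanGain^\top$ reduces exactly to the DARE.
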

\begin{proof}
From the system equations, the one-step evolution of the estimation error $\bm{e}_k := \bm{x}_k - \hat{\bm{x}}_k$ and of the actuation-projected residual $\tilde{\bm{r}}_k := \hat{\bm{x}}_k - \ActuationProjectedState_{k}$ is:
\begin{align}
	\bm{e}_{k+1} &= (\bm{A} - \KalmanGain \bm{C}) \bm{e}_k + \bm{w}_k - \KalmanGain \bm{v}_k, \\
	\tilde{\bm{r}}_{k+1} &= \bm{A} \tilde{\bm{r}}_k + \KalmanGain \bm{C} \bm{e}_k + \KalmanGain \bm{v}_k.
\end{align}
Stacking ${\bm{z}_k := [\bm{e}_k^\top, \tilde{\bm{r}}_k^\top]^\top}$ yields
\begin{align}\label{eq:projAnomDynamics}
	\bm{z}_{k+1} = \bm{F} \bm{z}_k + \bm{G} \bm{\eta}_k,
\end{align}
with $\bm{F}, \bm{G}$ as in \cref{eq:covMatF,eq:covMatG} and $\bm{\eta}_k \triangleq [\bm{w}_k^\top, \bm{v}_k^\top]^\top$.  
The covariance propagates as
\begin{align*}
	\bm{P}_{z,k+1} = & \ \bm{F}\,\mathbb{E}[\bm{z}_k\bm{z}_k^\top]\bm{F}^\top 
	+ \bm{F}\,\mathbb{E}[\bm{z}_k\bm{\eta}_k^\top]\bm{G}^\top \\
	&+ \bm{G}\,\mathbb{E}[\bm{\eta}_k\bm{z}_k^\top]\bm{F}^\top 
	+ \bm{G}\,\mathbb{E}[\bm{\eta}_k\bm{\eta}_k^\top]\bm{G}^\top.
\end{align*}
Since $\bm{\eta}_k$ is white, zero-mean, and independent of $\bm{z}_k$, 
$\mathbb{E}[\bm{z}_k\bm{\eta}_k^\top]=\bm{0}$ and 
$\mathbb{E}[\bm{\eta}_k\bm{\eta}_k^\top]=\operatorname{diag}(\bm{Q},\bm{R})$, giving
\[
	\bm{P}_{z,k+1} = \bm{F} \bm{P}_{z,k} \bm{F}^\top + \bm{\Pi},
\]
with $\bm{\Pi}$ as in \cref{eq:covMatPi}. 

At synchronization $k=0$, $\operatorname{Cov}(\bm{e}_0)=\bm{P}$ and $\tilde{\bm{r}}_0=\bm{0}$, hence
\[
\bm{P}_{z,0} = 
\begin{bmatrix} 
	\bm{P} & \bm{0} \\ \bm{0} & \bm{0}
\end{bmatrix}
= \operatorname{diag}(\bm{P},\bm{0}).
\]
Iterating \eqref{eq:covRecurs} $k$ times yields $\bm{P}_{z,k}$; the desired covariance is the $(2,2)$ block, $\bm{\Sigma}_{\tilde{r},k}$.
\end{proof}

\begin{theorem}[Confidence for the Anomaly Measure]
\label{thm:anomaly_threshold}
Consider the anomaly measure
    \begin{align}\label{eq:anomalyScore}
        \tilde{z}_k = \tilde{\bm{r}}_k^\top \bm{\Sigma}_{\tilde{r}, k}^{-1} \tilde{\bm{r}}_k.
    \end{align}
Choosing $z_{\text{x}} = F_{\chi^{2}(n)}^{-1}(\desprob)$ ensures
$\mathbb{P}(\tilde{z}_k \le z_{\text{x}} \mid \mathcal{H}_0) = \desprob$ for any given probability $\desprob$.
\end{theorem}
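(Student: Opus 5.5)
The plan is to show that, under $\mathcal{H}_0$, the actuation-projected residual $\tilde{\bm{r}}_k$ is a zero-mean Gaussian vector with covariance $\bm{\Sigma}_{\tilde{r},k}$. Once this holds, the whitened vector $\bm{\Sigma}_{\tilde{r},k}^{-1/2}\tilde{\bm{r}}_k$ is standard normal in $\mathbb{R}^n$, so $\tilde{z}_k$ is $\chi^2(n)$-distributed. The claim then follows from the definition of the quantile function, exactly as in Lemma~\ref{thm:chi2} and Corollary~\ref{cor:threshold}.

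\textbf{Step 1: Gaussianity and zero mean.} I will start from the stacked dynamics \eqref{eq:projAnomDynamics} derived in the proof of Theorem~\ref{thm:residual_cov}: $\bm{z}_{j+1}=\bm{F}\bm{z}_j+\bm{G}\bm{\eta}_j$. At the initial step, $\bm{z}_0=[\bm{e}_0^\top,\bm{0}^\top]^\top$, where $\bm{e}_0$ is the steady-state Kalman error; under Assumption~\ref{ass:converged} and $\mathcal{H}_0$ it is zero-mean Gaussian with covariance $\bm{P}$. The noises $\bm{\eta}_j$ are zero-mean Gaussian, white, and independent of $\bm{z}_0$. Because the recursion is linear, $\bm{z}_k=\bm{F}^k\bm{z}_0+\sum_{j<k}\bm{F}^{k-1-j}\bm{G}\bm{\eta}_j$ is a linear combination of jointly Gaussian variables, so it is jointly Gaussian with zero mean. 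Its marginal block $\tilde{\bm{r}}_k$ is therefore Gaussian with mean $\bm{0}$ and, by Theorem~\ref{thm:residual_cov}, covariance $\bm{\Sigma}_{\tilde{r},k}$.

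\textbf{Step 2: Invertibility of $\bm{\Sigma}_{\tilde{r},k}$.} The score $\tilde{z}_k$ is only well defined if $\bm{\Sigma}_{\tilde{r},k}\succ 0$, and this is the step I expect to be the main obstacle. At $k=0$ the covariance is zero, so the statement can only hold for $k\ge 1$, or more realistically for $k$ large enough. For $k\ge1$, the $(2,2)$ block of $\bm{\Pi}$ is $\KalmanGain\bm{R}\KalmanGain^\top$, which has full rank only if $\KalmanGain$ has rank $n$. That cannot happen when $p<n$: here $p$ is the number of joints and $n=2p$.

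Positivity therefore has to come from accumulation. Through $\bm{A}$, which for the double integrator couples velocity into position, the reachable directions of the noise injected via $\KalmanGain\bm{v}_j$ and $\KalmanGain\bm{C}\bm{e}_j$ will span $\mathbb{R}^n$ after at least two steps, by a controllability-type argument on the pair $(\bm{A},[\KalmanGain\bm{C}\ \ \KalmanGain])$.

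My first attempt will simply state that $\bm{\Sigma}_{\tilde{r},k}$ is assumed or verified positive definite for the $k$ under consideration. If that is not acceptable, the fallback is to replace the inverse by the pseudoinverse and $n$ by $\operatorname{rank}\bm{\Sigma}_{\tilde{r},k}$.

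\textbf{Step 3: Conclude.} With $\bm{\Sigma}_{\tilde{r},k}\succ0$, set $\bm{\xi}=\bm{\Sigma}_{\tilde{r},k}^{-1/2}\tilde{\bm{r}}_k\sim\mathcal{N}(\bm{0},\bm{I}_n)$. Then $\tilde{z}_k=\|\bm{\xi}\|_2^2\sim\chi^2(n)$. Since the $\chi^2(n)$ CDF is continuous and strictly increasing, $\mathbb{P}(\tilde{z}_k\le F^{-1}_{\chi^2(n)}(\desprob))=\desprob$ for every $\desprob\in(0,1)$.

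Note that, unlike the windowed statistic $w_k$, no independence across time is needed here, because only a single-time marginal is involved.
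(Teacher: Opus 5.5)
Your proposal is correct and takes the same route as the paper: $\tilde{\bm{r}}_k$ is zero-mean Gaussian with covariance $\bm{\Sigma}_{\tilde{r},k}$ from Theorem~\ref{thm:residual_cov}, whitening gives $\chi^2(n)$, and the quantile definition finishes the argument; you additionally justify the Gaussianity through the linear stacked recursion, which the paper only asserts. Your Step~2 is a genuine refinement the paper glosses over. Since $\tilde{\bm{r}}_1=\bm{L}\bm{r}_0$, one has $\bm{\Sigma}_{\tilde{r},1}=\bm{L}\bm{\Sigma}\bm{L}^\top$, which has rank at most $p<n$; more generally $\bm{\Sigma}_{\tilde{r},k}=\sum_{i=0}^{k-1}\bm{A}^i\bm{L}\bm{\Sigma}\bm{L}^\top(\bm{A}^i)^\top$. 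The paper's ``for all $k$'' therefore only holds once this Gramian-type sum is nonsingular, which requires at least $k\ge 2$ and controllability of $(\bm{A},\bm{L})$; otherwise your pseudoinverse/rank fallback is needed.
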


\begin{proof}
    Under $\mathcal{H}_0$, the residual $\tilde{\bm{r}}_k$ is a zero-mean Gaussian vector, $\tilde{\bm{r}}_k \sim \mathcal{N}(\bm{0}, \bm{\Sigma}_{\tilde{r}, k})$. 
    The time-varying normalization with the covariance $\bm{\Sigma}_{\tilde{r}, k}$ at each step, computed as in \cref{thm:residual_cov}, ensures that the resulting statistic has a stationary distribution, $\tilde{z}_k \sim \chi^{2}(n)$, for all $k$. The theorem’s result follows from the definition of the inverse CDF, $F_{\chi^{2}(n)}^{-1}$.
\end{proof}

\subsection{Command Scaling Active Defence}
As an active defence mechanism,  we propose to reduce the magnitude of command $\bm{u}^{\text{nom}}_{k}$ of \cref{eq:pid_actuation_joint} through a gain scaling function $\pinExp(\tilde{z})$, with $\tilde{z}$ as in \cref{eq:anomalyScore}.

Let $z_{\text{x}}>0$ be a design abscissa and
$\beta \in (0,1)$ 
the desired gain factor at $\tilde{z}=z_{\text{x}}$. 
Introduce a shape exponent $\gamma>0$ and a smooth, strictly decreasing function ${\pinExp : [0,\infty) \to (0,1]}$,
\begin{align}
	\pinExp(\tilde{z}) &= \exp\!\left[-\left(\dfrac{\tilde{z}}{z_{\text{scale}}}\right)^{\gamma}\right], 
	\label{eq:exp_decreasing_gain_law_zero} \\
z_{\text{scale}} &= \dfrac{z_{\text{x}}}{\big(-\ln (\beta) \big)^{1/\gamma}},
\end{align}
satisfying $\pinExp(0)=1$, $\pinExp(z_{\text{x}})= \beta $, and $\pinExp(\tilde{z})\to 0$ as $\tilde{z}\to\infty$.

\begin{theorem}\label{th:main}
Consider a robotic manipulator with nominal joint-space control input $\bm{u}^{\text{nom}}_k$ as in~\eqref{eq:pid_actuation_joint}. 
Let {${\tilde{z}_k \in [0,\infty)}$} denote the anomaly score defined in~\eqref{eq:anomalyScore}, and let {${f : [0,\infty)\to (0,1]}$} be a strictly decreasing function satisfying
\begin{align*}
f(0) = 1, \quad \lim_{\tilde{z}\to\infty} f(\tilde{z}) = 0.
\end{align*}
Define the scaled control law
\begin{align}\label{eq:finalControl}
\bm{u}_k = f(\tilde{z}_k)\, \bm{u}^{\text{nom}}_k. 
\end{align}
Then the following hold:
\begin{enumerate}

\item \textbf{Probabilistic actuation guarantee.} Let $\tilde{z}_k \sim \chi^{2}(n)$ under the null hypothesis, and let $z_{\text{x}} = F^{-1}_{\chi^{2}(n)}(\desprob)$ denote the inverse CDF at confidence level $\desprob \in (0,1)$. If design parameters of $\pinExp(\cdot)$ are chosen such that
\begin{align*}
f(z_{\text{x}}) \geq \beta,
\end{align*}
for some $\beta \in (0,1)$, then with probability at least $\desprob$, the scaled input magnitude satisfies
\begin{align}\label{eq:probGainTh}
\|\bm{u}_k\| \;\geq\; \beta \,\|\bm{u}^{\text{nom}}_k\|.
\end{align}
\item \textbf{Closed-loop stability (frozen gain).}
Consider the discretized double-integrator dynamics induced by inverse-dynamics compensation in~\eqref{eq:idealDynamics}.
If the nominal frozen-gain closed loop is Schur stable (i.e.,~\eqref{eq:finalControl} with $\pinExp(\cdot)\equiv 1$), then for any constant gain factor $\bar f\in(0,1]$ the frozen-gain closed loop under \cref{eq:finalControl} is Schur stable, and hence exponentially stable.
\end{enumerate}
\end{theorem}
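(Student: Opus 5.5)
The plan is to treat the two claims separately. Claim 1 follows from the monotonicity of $\pinExp$ together with \cref{thm:anomaly_threshold}. Claim 2 reduces to a scalar-parameter Schur-stability check that is affine in the frozen gain $\bar f$.

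\emph{Probabilistic actuation guarantee.} Under $\mathcal{H}_0$, \cref{thm:anomaly_threshold} (built on the covariance of \cref{thm:residual_cov}) gives $\mathbb{P}(\tilde z_k\le z_{\text{x}}\mid\mathcal{H}_0)=\desprob$. On the event $\{\tilde z_k\le z_{\text{x}}\}$, strict monotonicity of $\pinExp$ yields $\pinExp(\tilde z_k)\ge \pinExp(z_{\text{x}})\ge\beta$. Since $\pinExp(\tilde z_k)>0$ is a scalar, \eqref{eq:finalControl} gives $\|\bm{u}_k\|=\pinExp(\tilde z_k)\|\bm{u}^{\text{nom}}_k\|\ge\beta\|\bm{u}^{\text{nom}}_k\|$. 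The event in \eqref{eq:probGainTh} therefore contains an event of probability $\desprob$, which proves the claim. This step is routine.

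\emph{Frozen-gain stability.} I will first reduce the closed loop to decoupled scalar double integrators. With exact inverse-dynamics compensation and $\bm{J}\bm{J}^\dagger=\bm{I}$ (full row rank of $\bm{J}$ for the redundant arm), the task coordinates obey $\ddot{\bm{\xi}}=\bm{u}^{\text{c}}$. Scaling the joint command by $\bar f$ scales the task acceleration by $\bar f$. I will treat the $-\dot{\bm{J}}\dot{\bm{q}}$ and feedforward terms as part of the nominal linearized/frozen model, taking zero reference about an equilibrium. The LQR gains are synthesized per axis on the discretized double integrator, so each task axis has error dynamics $\bm{x}^+=(\bm{A}_1-\bar f\bm{B}_1\bm{K})\bm{x}$, where
\[
\bm{A}_1=\begin{bmatrix}1&T\\0&1\end{bmatrix},\qquad \bm{B}_1=\begin{bmatrix}T^2/2\\ T\end{bmatrix},\qquad \bm{K}=[k_p\;k_d].
\]
The characteristic polynomial is
\[
z^2+a_1(\bar f)z+a_0(\bar f),\qquad a_1=\bar f\bigl(k_pT^2/2+k_dT\bigr)-2,\qquad a_0=1-\bar f\,c,
\]
with $c:=k_dT-k_pT^2/2$.

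Next I apply the Jury conditions for a second-order polynomial:
\[
P(1)=\bar f k_pT^2>0,\qquad P(-1)=4-2\bar f k_dT>0,\qquad |a_0|<1 \iff 0<\bar f c<2 .
\]
Each condition is affine in $\bar f$. At $\bar f=1$ they hold by the Schur-stability hypothesis, which in particular gives $k_p>0$, $0<c<2$ and $k_dT<2$. For $\bar f\in(0,1]$: $P(1)>0$ trivially; $P(-1)\ge 4-2k_dT>0$; and $0<\bar f c\le c<2$. So the scaled closed loop is Schur for every $\bar f\in(0,1]$. Exponential stability then follows from Schur stability of a linear time-invariant system.

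The main obstacle is the reduction step, not the Jury computation. One must justify that the frozen-gain closed loop is genuinely the decoupled double-integrator family. This requires full-rank $\bm{J}$, the $\dot{\bm{J}}\dot{\bm{q}}$ compensation being handled consistently with scaling, and self-motion in the null space not being part of the stated stability claim. I would state these as the standing interpretation of ``discretized double-integrator dynamics''. If gains are not diagonal, I would fall back on a Lyapunov argument using the LQR Riccati matrix. In that case I expect the gain margin to need verification, because discrete LQR does not automatically guarantee margins down to $\bar f\to 0^+$ in the multivariable case; the scalar Jury argument avoids this.
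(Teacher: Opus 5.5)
Your proposal is correct and follows the paper's proof. Item~1 uses the quantile event $\{\tilde z_k\le z_{\text{x}}\}$ together with monotonicity of $f$. Item~2 uses the second-order Jury conditions for $\bm{A}_j-\bar f\,\bm{B}_j\bm{K}$, which continue to hold for all $\bar f\in(0,1]$ once they hold at $\bar f=1$; your extra condition $\bar f c<2$ is redundant, since $P(1)>0$ and $P(-1)>0$ already give $a_0>-1$.

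The only real difference is the reduction step. You reduce per task axis, using $\bm{J}\bm{J}^\dagger=\bm{I}$ and linearizing about an equilibrium. The paper reduces per joint by treating $\bm{J}^\dagger$ as locally constant. Your version is the more careful of the two, because for a redundant arm the joint-space loop also contains uncontrolled null-space modes.
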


\begin{proof}
\textbf{Probabilistic actuation guarantee.} Under $\mathcal{H}_0$, the anomaly score follows $\tilde{z}_k \sim \chi^{2}(n)$. By definition of the quantile,
{${\Pr(\tilde{z}_k \le z_{\text{x}}) = \desprob}$}.
On this event, since $\pinExp(\cdot)$ is non-increasing, we have $f(\tilde{z}_k) \geq f(z_{\text{x}})$. By assumption, {${f(z_{\text{x}}) \geq \beta}$}.
Substituting into~\eqref{eq:finalControl},
\begin{align*}
\|\bm{u}_k\| = f(\tilde{z}_k)\,\|\bm{u}^{\text{nom}}_k\| \geq \beta \|\bm{u}^{\text{nom}}_k\|.
\end{align*}
Therefore, with probability at least $\desprob$, the commanded input magnitude remains at least a fraction $\beta$ of the nominal value.

\textbf{Closed-loop stability (frozen gain).}
Under inverse-dynamics compensation, and treating the Jacobian pseudoinverse as locally constant, it suffices to consider one joint with
${u_{k,j}^{\mathrm{nom}}=-\bm{K}\bm{x}_{k,j}}$, where ${\bm{x}_{k,j}\in\mathbb{R}^2}$ and $\bm{K}=[K_p\;K_d]$.
For a frozen ${\bar f\in(0,1]}$, the closed-loop matrix is
${\bm{A}_{\mathrm{cl}}(\bar f)=\bm{A}_j-\bar f\,\bm{B}_j\bm{K}}$,
where $\bm{A}_j,\bm{B}_j$ are the joint double-integrator matrices.
Applying the second-order Jury criterion to the characteristic polynomial of $\bm{A}_{\mathrm{cl}}(\bar f)$, Schur stability is equivalent to
${\bar f T_s^2 K_p>0}$,
${2-\bar f T_s K_d>0}$,
and
${K_d>\tfrac{T_s}{2}K_p}$.
These conditions hold at $\bar f=1$ by assumption, hence
${K_p>0}$,
${2>T_sK_d}$,
and
${K_d>\tfrac{T_s}{2}K_p}$.
Therefore, for every $\bar f\in(0,1]$, the first condition holds since $\bar f>0$ and $K_p>0$, the second becomes less restrictive as $\bar f$ decreases, and the third is independent of $\bar f$.
Thus $\bm{A}_{\mathrm{cl}}(\bar f)$ is Schur for all $\bar f\in(0,1]$, and the frozen-gain closed loop is exponentially stable.
\end{proof}

In practice, $f(\tilde{z}_k)$ is time-varying and state-dependent; however, item~1 of the theorem confines $f$ to the compact set $[\beta,1]$ with probability $\desprob$, over which frozen stability holds uniformly.



\begin{corollary}\label{th:corollaryPercentage}
	Fix a desired confidence level $\desprob\in(0,1)$ and a desired gain floor $\beta\in(0,1)$. 
	Fix $z_{\text{x}} = F^{-1}_{\chi^{2}(n)}(\desprob)$ in~\eqref{eq:exp_decreasing_gain_law_zero} so that $f(z_{\text{x}})=\beta$.
	Then, for the control law~\eqref{eq:finalControl} with $\tilde{z}_k$ as in \cref{eq:anomalyScore}, item~1 of \cref{th:main} guarantees that the magnitude of the final command satisfies~\eqref{eq:probGainTh}
	with probability $\desprob$, i.e., the final input remains at least {${ \beta \cdot 100\%}$} of the nominal magnitude with probability $\desprob$.
\end{corollary}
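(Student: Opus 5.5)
The corollary is essentially a specialization of item~1 of \cref{th:main}, so the plan is to verify that the concrete gain law \eqref{eq:exp_decreasing_gain_law_zero} with the prescribed $z_{\text{x}}$ meets every hypothesis of that item, and then read off the conclusion.

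\textbf{Step 1: the gain law satisfies the hypotheses on $f$.} For $\beta\in(0,1)$ and $\gamma>0$ we have $-\ln\beta>0$, so $z_{\text{scale}}>0$ is well defined. Then $\tilde z\mapsto(\tilde z/z_{\text{scale}})^\gamma$ is strictly increasing on $[0,\infty)$, equals $0$ at $\tilde z=0$, and diverges as $\tilde z\to\infty$. Composing with $t\mapsto e^{-t}$ therefore gives a strictly decreasing $f:[0,\infty)\to(0,1]$ with $f(0)=1$ and $f(\tilde z)\to0$ as $\tilde z\to\infty$. A direct substitution gives
\[
f(z_{\text{x}})=\exp\!\bigl[-(z_{\text{x}}/z_{\text{scale}})^\gamma\bigr]=\exp(\ln\beta)=\beta ,
\]
so the condition $f(z_{\text{x}})\ge\beta$ holds with equality.

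\textbf{Step 2: distributional hypothesis and quantile.} The requirement $\tilde z_k\sim\chi^2(n)$ under $\mathcal{H}_0$ comes from \cref{thm:anomaly_threshold}, with the normalization $\bm{\Sigma}_{\tilde r,k}$ supplied by \cref{thm:residual_cov}. The choice $z_{\text{x}}=F^{-1}_{\chi^2(n)}(\desprob)$ is then exactly the quantile used in item~1. Hence item~1 applies: on the event $\{\tilde z_k\le z_{\text{x}}\}$, which has probability $\desprob$, monotonicity gives $f(\tilde z_k)\ge\beta$. Since $f>0$, we get $\|\bm{u}_k\|=f(\tilde z_k)\|\bm{u}^{\text{nom}}_k\|\ge\beta\|\bm{u}^{\text{nom}}_k\|$. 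Finally, rewrite $\beta$ as the percentage $\beta\cdot100\%$.

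\textbf{Main obstacle: the wording ``with probability $\desprob$.''} This is a matter of precision rather than difficulty. The event $\{\tilde z_k\le z_{\text{x}}\}$ has probability exactly $\desprob$ and implies \eqref{eq:probGainTh}. The event \eqref{eq:probGainTh} itself can be strictly larger, for instance when $\bm{u}^{\text{nom}}_k=\bm{0}$. Because $f$ is strictly decreasing, however, $f(\tilde z_k)\ge\beta$ holds if and only if $\tilde z_k\le z_{\text{x}}$, so the scaling factor itself stays at or above $\beta$ with probability exactly $\desprob$. I would state the conclusion this way and note that the bound on $\|\bm{u}_k\|$ holds with probability at least $\desprob$.

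A second, smaller point is that $\Pr(\tilde z_k\le z_{\text{x}})=\desprob$ exactly, with no slack, requires the continuity of the $\chi^2(n)$ CDF; this is standard.
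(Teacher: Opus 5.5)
Your proposal is correct and follows the paper's route: the corollary is item~1 of \cref{th:main} specialized to the exponential law, with $f(z_{\text{x}})=\beta$ holding by construction of $z_{\text{scale}}$ and $\tilde z_k\sim\chi^2(n)$ under $\mathcal{H}_0$ supplied by \cref{thm:anomaly_threshold}. Your precision remark also agrees with item~1 as stated: the bound on $\|\bm{u}_k\|$ holds with probability at least $\desprob$, while the event $f(\tilde z_k)\ge\beta$ has probability exactly $\desprob$, which sharpens the corollary's looser wording.
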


\newcommand{\extract}[3]{\pi_{#2}\!\bigl(#1\bigr)_{#3}}

\section{Optimal Stealth Attack}\label{sec:attack}
We now formalize the adversary's strategy under the assumptions of \cref{subsec:assumptions}.
The attacker's goal is to induce stealthy malicious end-effector accelerations.

To express the predicted trajectories of different vectors (e.g., acceleration, velocity) from a closed-loop simulation, we introduce the following compact notation. 

Let $\simul_{k,j}(\bm a)$ denote the \emph{$j$-step-ahead prediction at time $k{+}j$,
computed at time $k$}, of the closed-loop system under the attack sequence $[\bm a,\bm 0,\dots,\bm 0]$
(of length $j+1$).

For a generic vector $\bm{v}$, we define
\begin{equation}
    \bm{v}_{k+j}^{\text{SIM}} :=
    \pi_{\bm{v}}\!\bigl(\simul_{k,j}(\bm{a})\bigr),
    \label{eq:projection_definition}
\end{equation}
where $\pi_{\bm{v}}(\cdot)$ extracts $\bm{v}$ from the noise-free simulated trajectory at prediction step $j$
(i.e., the terminal value at time $k+j$).
The operator $\simul_{k,j}(\bm{a})$ models Assumption 1 of \cref{subsec:assumptions}, as it propagates \cref{eq:lti_plant,eq:kalmanX,eq:kalmanR} under perfect-model assumptions (exact initial state and noise-free dynamics) and it computes the control input via \cref{eq:finalControl} including the proposed active defence mechanism.



\subsection{Attack Model and Delay Structure}
Due to estimator and control delays, an injected signal at time $k$, $\bm{a}_k$, influences the end-effector acceleration two steps later (under the closed-loop dynamics in \cref{eq:lti_plant,eq:kalmanX,eq:kalmanR}). Specifically,
\begin{equation}\label{eq:simulatedAcceleration}
    \PredictedAcceleration_{k+2} = 
    \pi_{\ddot{\bm{p}}}\!\bigl(\simul_{k,2}(\bm{a}_{k})\bigr),
\end{equation}
where $\simul_{k,2}$ denotes a two-step-ahead simulation of the closed-loop system under attack sequence $[\bm{a}_{k}, \bm{0}, \bm{0}]$.

The attacker's high-level objective at each time step is to make the predicted end-effector acceleration match a desired target acceleration $\TargetAccAttack_{k+2}$. This is formulated as:
\begin{equation}
    \label{eq:objFunIntro}
    \min_{\bm{a}_k} \quad \tfrac{1}{2} \left\| \TargetAccAttack_{k+2} - \pi_{\ddot{\bm{p}}}\!\bigl(\simul_{k,2}(\bm{a}_{k})\bigr) \right\|^2.
\end{equation}

\subsection{Incremental Attack Formulation}\label{sec:IncrementalGradient}
Due to feedback linearization, the manipulator’s joint-space dynamics reduce to double integrators, which entails an \emph{integrator vulnerability}: a persistent sensor bias can induce drift in the regulated variables. This vulnerability has been analyzed for sensor \emph{bias injection attacks} (BIAs)—i.e., constant sensor injections—in linear systems~\cite{tosun2025kullback}. 
Although the joint–task mapping is nonlinear and we consider more general \emph{false data injection attacks} (FDIAs) with arbitrary sensor injections, we argue that effective FDIAs still exploit the integrator vulnerability \emph{locally}.
For this reason, we model the attack \emph{incrementally}:
\begin{equation}\label{eq:incremental}
    \bm{a}_k \;=\; \bm{a}_{k-1} + \bm{\Delta}_k, \qquad k\ge 1,
\end{equation}
where $\bm{\Delta}_k$ is an increment and $\bm{a}_k$ is initialized as $\bm{a}_0 = \bm{\Delta}_0$ at the attack onset. 
This parameterizes an FDIA as deviations from a baseline BIA (via the increments $\bm{\Delta}_k$), and is convenient for gradient-based synthesis.\\

A first-order expansion of $\pi_{\ddot{\bm{p}}}\!\bigl(\simul_{k,2}(\bm{a})\bigr)$ around $\bm{a}_{k-1}$, with $\PredictedAcceleration_{k+2}$ as in \cref{eq:simulatedAcceleration}, gives:
\begin{equation}\label{eq:nonLinearMap2}
    \PredictedAcceleration_{k+2} \;\approx\;
    \pi_{\ddot{\bm{p}}}\!\bigl(\simul_{k,2}(\bm{a}_{k-1})\bigr)
    + \simulLin_k \bm{\Delta}_k \;+\; \mathcal{O}(\|\bm{\Delta}_k\|^2),
\end{equation}
where the Jacobian
\begin{equation}\label{eq:linearizedMap}
    \simulLin_{k} \;=\; \left. \frac{\partial}{\partial \bm{a}}\,
    \pi_{\ddot{\bm{p}}}\!\bigl(\simul_{k,2}(\bm{a})\bigr) \right|_{\bm{a} = \bm{a}_{k-1}}
\end{equation}
is computed numerically via a central-difference scheme with step-size tuning~\cite{ramachandran_sampling_2021}; $\simulLin_k$ provides the local sensitivity needed to differentiate the objective \cref{eq:objFunIntro}.

By substituting the linear attack model \eqref{eq:nonLinearMap2} into \eqref{eq:objFunIntro} and introducing a regularization term, a quadratic objective function is obtained. Specifically, 
the attack increment $\bm{\Delta}_k$ minimizes the quadratic cost
\begin{equation}\label{eq:quadCostAttack}
    \tfrac{1}{2}\big\| 
        \simulLin_k \bm{\Delta}_k
        - \TargetAccAttack_{k+2}
        + \pi_{\ddot{\bm{p}}}\!\bigl(\simul_{k,2}(\bm{a}_{k-1})\bigr)
    \big\|^2
    + \tfrac{\optLambda}{2}\|\bm{\Delta}_k\|^2,
\end{equation}
where $\optLambda > 0$ penalizes large increments.

\subsection{Attack Objective}\label{sec:attackerPD}
We restrict the attack to the translational DOFs, as these are most relevant to adversarial manipulation.
The adversary defines $\TargetAccAttack_{k+2}$ of \cref{eq:objFunIntro} via a one-step-ahead PD law plus feedforward based on a plan $\{ \TargetPositionAttack_{k}, \TargetVelocityAttack_{k}\}$.
Specifically, the PD errors are computed as the difference between the desired and predicted quantities, under the incremental attack approach ($\bm{\Delta}_k{=}0$ or equivalently, $\bm{a}_k = \bm{a}_{k-1}$):
\begin{align}\label{eq:PDattacker}
    \TargetAccAttack_{k+2} &= \GainAttackProportional
    \big(\TargetPositionAttack_{k+1} - \PredictedPosition_{k+1}\big) \nonumber
    + \GainAttackDerivative
    \big(\TargetVelocityAttack_{k+1} - \PredictedVelocity_{k+1}\big),
\end{align}
where
\begin{align*}
    \PredictedPosition_{k+1} &= 
    \pi_{\bm{p}}\!\bigl(\simul_{k,1}(\bm{a}_{k-1})\bigr), \\
    \PredictedVelocity_{k+1} &=
    \pi_{\dot{\bm{p}}}\!\bigl(\simul_{k,1}(\bm{a}_{k-1})\bigr),
\end{align*}
are the one-step-ahead end-effector position and velocity obtained from the closed-loop simulator under the attack sequence $[\bm{a}_{k-1}, \bm{0}]$.
The resulting $\TargetAccAttack_{k+2}$ is therefore a way to counteract the predicted drift resulting from $\bm{\Delta}_k{=}0$.

\subsection{Stealth Constraint}
The ADS residual is modeled as
\begin{equation}
    \Residual_k = \bm{\Delta}_k + \baselineIn_k,
\end{equation}
with $\baselineIn_k$ denoting the baseline innovation, defined as
\begin{equation}
    \baselineIn_k = \bigl(\bm{y}_k + \bm{a}_{k-1} - \hat{\bm{y}}_k\bigr).
\end{equation}

The adversary uniformly allocates a per-step anomaly budget
\begin{equation}
    \tau' = \tfrac{\ThresholdVal}{\AttackDuration},
    \label{eq:tauQuota}
\end{equation}
so that the stealth constraint becomes
\begin{equation}\label{eq:stealthConstrUnexpand}
    \bigl(\bm{\Delta}_k + \baselineIn_k\bigr)^\top 
    \bm{\Sigma}^{-1}
    \bigl(\bm{\Delta}_k + \baselineIn_k\bigr) \le \tau'.
\end{equation}

\subsection{QCQP Formulation}
Considering the objective function \cref{eq:quadCostAttack} and expanding the constraint of \cref{eq:stealthConstrUnexpand}, the adversary's problem reduces to the convex QCQP
\begin{equation}
\begin{aligned}
    \min_{\bm{\Delta}_k}\ & \tfrac{1}{2}\bm{\Delta}_k^\top \bm{H}\bm{\Delta}_k + \bm{g}^\top \bm{\Delta}_k \\
    \text{s.t.}\ & \bm{\Delta}_k^\top \optimAmat\bm{\Delta}_k + \bm{b}^\top \bm{\Delta}_k + c \le 0,
\end{aligned}
\label{eq:optimProb}
\end{equation}
with parameters
\begin{align*}
    \bm{H} &= \simulLin_k^\top \simulLin_k + \optLambda \bm{I}, \\
    \bm{g} &= -\simulLin_k^\top \Big(\TargetAccAttack_{k+2} - \pi_{\ddot{\bm{p}}}\!\bigl(\simul_{k,2}(\bm{a}_{k-1})\bigr)\Big), \\
    \optimAmat &= \bm{\Sigma}^{-1}, \\ 
    \bm{b} &= 2\bm{\Sigma}^{-1}\baselineIn_k, \\
    c &= \baselineIn_k^\top \bm{\Sigma}^{-1}\baselineIn_k - \tau'.
\end{align*}


\begin{theorem}[Convexity of adversary's QCQP]
Problem \eqref{eq:optimProb} is convex, and thus admits the global minimizer $\bm{\Delta}_k^\star$.
\end{theorem}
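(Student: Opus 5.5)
The plan is to check convexity of the objective and of the feasible set separately, and then obtain existence of a global minimizer from strong convexity and closedness, provided the feasible set is nonempty.

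\textbf{Objective.} The Hessian of the objective in \eqref{eq:optimProb} is $\bm{H} = \simulLin_k^\top\simulLin_k + \optLambda\bm{I}$. For any $\bm{x}\neq\bm{0}$,
\[
\bm{x}^\top\bm{H}\bm{x} = \|\simulLin_k\bm{x}\|^2 + \optLambda\|\bm{x}\|^2 \ge \optLambda\|\bm{x}\|^2 > 0,
\]
because $\optLambda>0$. Hence $\bm{H}\succ 0$, and the objective is strongly convex with modulus $\optLambda$. It is also coercive.

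\textbf{Constraint.} The constraint matrix is $\optimAmat=\bm{\Sigma}^{-1}$. Since $\bm{\Sigma}\succ 0$ (as noted in \cref{subsec:ADS}), we have $\optimAmat\succ 0$. The constraint function $\bm{\Delta}^\top\optimAmat\bm{\Delta}+\bm{b}^\top\bm{\Delta}+c$ is therefore a convex quadratic. Its sublevel set at $0$ is convex and closed.

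More concretely, completing the square recovers the unexpanded form \eqref{eq:stealthConstrUnexpand}. The feasible set is the ellipsoid
\[
\{\bm{\Delta} : (\bm{\Delta}+\baselineIn_k)^\top\bm{\Sigma}^{-1}(\bm{\Delta}+\baselineIn_k)\le\tau'\},
\]
centred at $-\baselineIn_k$. This set is compact, and it is nonempty since $\tau'=\tau/\AttackDuration>0$: the point $\bm{\Delta}=-\baselineIn_k$ satisfies the constraint strictly. That strict feasibility also gives Slater's condition, so KKT conditions characterize the optimum. A convex objective minimized over a convex set yields a convex program.

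\textbf{Existence and uniqueness.} The objective is continuous and the feasible set is a nonempty compact set, so by Weierstrass a minimizer $\bm{\Delta}_k^\star$ exists. By convexity, every local minimizer is global. Strong convexity of the objective makes the minimizer unique.

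The only step needing care is nonemptiness of the feasible set, which relies on $\tau'>0$. Everything else is a direct positive-definiteness check.
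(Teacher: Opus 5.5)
Your proposal is correct and follows essentially the paper's argument. The shared steps are positive definiteness of $\bm{H}$ (from $\optLambda>0$) and of $\optimAmat=\bm{\Sigma}^{-1}$, an ellipsoidal feasible set made nonempty by the choice $\bm{\Delta}_k=-\baselineIn_k$ (the paper's Feasibility remark), and uniqueness from strict convexity. Your explicit use of compactness of the ellipsoid and Weierstrass for existence is a worthwhile tightening: the paper's claim that a ``strictly convex objective over a non-empty convex feasible set guarantees existence'' is only literally true with closedness and boundedness (or coercivity), which you supply.
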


\begin{proof}
The cost Hessian $\bm{H} = \simulLin_k^\top \simulLin_k + \optLambda \bm{I} \succeq \optLambda \bm{I} \succ 0$, so the objective is strictly convex.  
The constraint has Hessian $\optimAmat = \bm{\Sigma}^{-1} \succ 0$, yielding a convex (ellipsoidal) feasible set.
A strictly convex objective over a non-empty convex feasible set guarantees existence and uniqueness of the global minimizer $\bm{\Delta}_k^\star$.
\end{proof}

\begin{remark}[Feasibility]
The feasible set is non-empty at every step: the choice $\bm{\Delta}_k = -\baselineIn_k$ drives the residual to zero and trivially satisfies the constraint for any $\tau' > 0$.
\end{remark}

\subsection{Iterative Attack Construction}
The attack evolves incrementally as in \cref{eq:incremental}
\begin{equation}
    \bm{a}_k = \bm{a}_{k-1} + \bm{\Delta}_k^\star,
\end{equation}
initialized with $\bm{a}_0 = \bm{\Delta}_0^\star$.  
At each step, the adversary runs an internal simulation to compute $\simul_k$ and $\simulLin_k$, then solves \eqref{eq:optimProb} to determine $\bm{\Delta}_k^\star$.

\section{Simulation Results}\label{sec:simul}
For reproducibility, the Matlab implementation used in this work has been archived and is publicly available \cite{gualandi_2026_zenodo_18813338}.

\begin{figure}[tb]
\centering
\vspace{0.70em}
\includegraphics[width=\columnwidth]{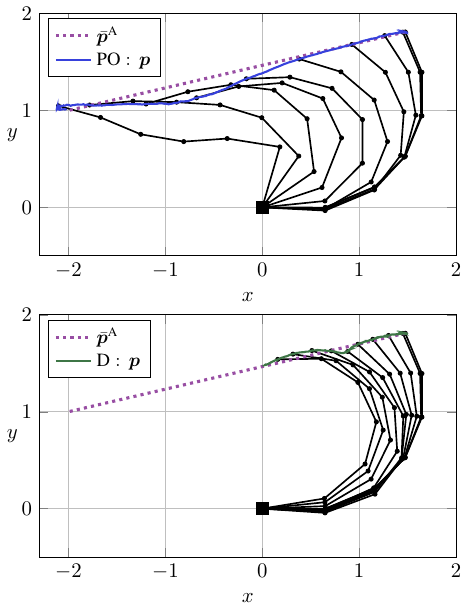}
\caption{
End-effector trajectories: attacker's reference $\TargetPositionAttack$, defended only by the passive detector (PO) and by the proposed active defence (D).
The latter defence limits drift toward the malicious target.}
\label{fig:EXP1robot}
\end{figure}

\begin{figure}[tbh]
\vspace*{1.5ex} 
\centering
\resizebox{\columnwidth}{!}{
\includegraphics{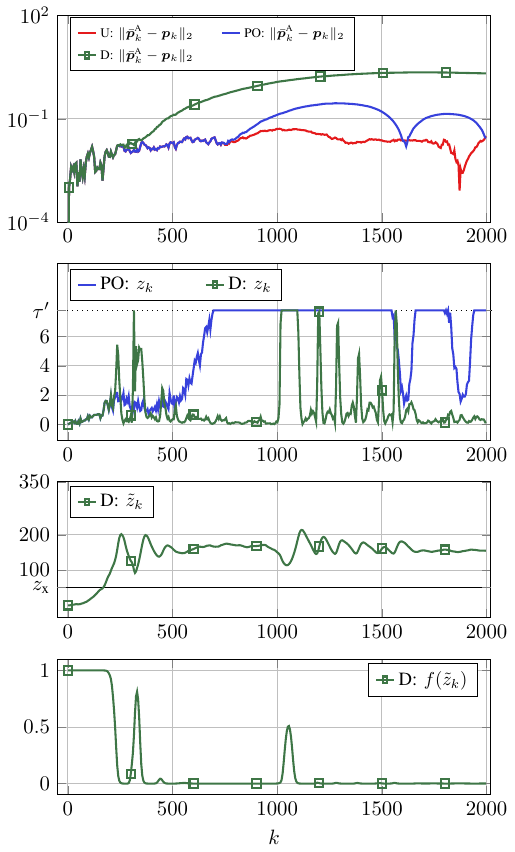}
}
\caption{
Attack tracking error $\|\TargetPositionAttack_k - \bm{p}_k\|_2$, Mahalanobis distances $z_k$ and $\tilde{z}_k$ of \cref{eq:mahalanobis_distance,eq:anomalyScore},  and scaling $\pinExp(\tilde{z})$ of \cref{eq:exp_decreasing_gain_law_zero}. With active defence (D), $\tilde{z}_k$ increases until scaling becomes effective, then settles, limiting attacker-realizable accelerations.}
\label{fig:EXP1signals}
\end{figure}

\begin{figure*}[!ht]
\centering
\resizebox{\textwidth}{!}{
\includegraphics{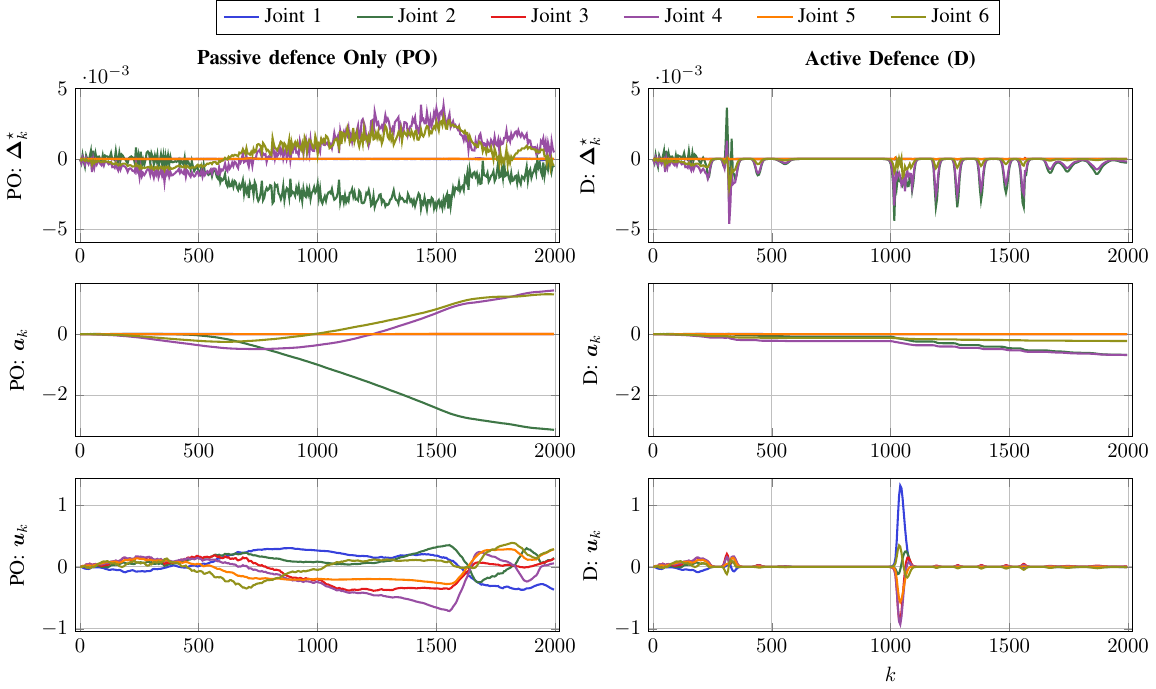}
}
\caption{
{Adversary and control signals.}
Optimal increment $\bm{\Delta}^\star_k$ of \cref{eq:optimProb}, attack sequence $\bm{a}_k$ of \cref{eq:incremental}, and resulting actuation $\bm{u}_k$ of \cref{eq:pid_actuation_task,eq:finalControl}.
Under the proposed active defence (D), the attacker escalates $\bm{\Delta}^\star_k$ but scaling suppresses the realized $\bm{u}_k$, constraining task-space manipulation.}
\label{fig:EXP1signals2}
\end{figure*}
\subsection{Experimental Setup}
The plant is a 6-DOF planar manipulator with link lengths $[0.65,\,0.55,\,0.45,\,0.45,\,0.45,\,0.45]$~m.  
The task space comprises planar position $(x,y)$ and one orientation $\theta_z$.  
The noise covariances are $\bm{R} = \csvConstNoDefeYesAtt{senseCovBlk}{1} \bm{I}$
and
{${
\bm{Q} = \mathrm{blkdiag}_j\!\big(q_c \,\bm{Q}_{\text{base}}\big)
}$}, where 
{${\bm{Q}_{\text{base}} = [ T_s^3/3, T_s^2/2; T_s^2/2, T_s]}$}
and {${q_c = 10^{-2}\,\mathrm{rad}^2/\mathrm{s}^3}$}, determined with the tuning-knob process.

The nominal task is to maintain the fixed end-effector pose with zero reference velocity and acceleration
\begin{align*}
\TargetPosition_0 =
\begin{bmatrix}\csvConstNoDefeYesAtt{pInit}{1} \\ \csvConstNoDefeYesAtt{pInit}{2} \end{bmatrix}\text{ m}, 
\qquad 
\TargetRotationMatrix_0 \text{ from }
\bm{q}_0=[0, \tfrac{\pi}{8},\cdots,\tfrac{\pi}{8}]^{\top}.
\end{align*}

The system is discretized with sampling time {${T_s= \csvConstNoDefeYesAtt{Ts}{1}}$}s.  
The passive defence ($\chi^2$) uses a sliding window of $W = \csvConstYesDefYesAtt{detectionWindow}{1}$ samples, and it is calibrated for an Average Run Length (ARL) of $\csvConstYesDefYesAtt{desiredChiSquare}{1}$ samples (i.e., one false positive every $0.95$ years under $\mathcal{H}_0$), which is realized through \cref{eq:desired_arl} by setting $\tau = \csvConstYesDefYesAtt{ADStau}{1}$. 

The active defence uses $\pinExp(\cdot)$ of \cref{eq:exp_decreasing_gain_law_zero} with {$ { \beta = (1 - 10^{-3} )}$},
{${\desprob = (1 - \csvConstYesDefYesAtt{scalingOneMinusP}{1})}$},
and $\gamma = \csvConstYesDefYesAtt{scalingExpSteepness}{1}$,
determining $z_{\text{x}} = \csvConstYesDefYesAtt{scalingZx}{1}$.
For \cref{th:corollaryPercentage}, we have formal guarantees that, on average and under $\mathcal{H}_0$, a single sample of the control command has magnitude below
$\csvConstYesDefYesAtt{scalingExcessPerc}{1}\%$
of the nominal one every $\csvConstYesDefYesAtt{scalingExcessHours}{1}$ hours, which we consider negligible.

The defence adds negligible computational overhead.
The QCQP for the attacker (not part of the defence) solves in 40 ms/step with a standard solver (Gurobi).

\paragraph*{Attacker.}
The attacker's goal is to displace the end-effector from $\TargetPositionAttack_0 = \TargetPosition_0$ to
$\TargetPositionAttack_{(\AttackDuration-1)} = 
[-2,1]^\top$. This maneuver is interpolated by quintic polynomials with zero boundary velocities and accelerations, generating the reference trajectory $\{\TargetPositionAttack_k, \TargetVelocityAttack_k, \PlannedAccAttack_{k}\}$ for $k \in \{0,\ldots,\AttackDuration-1\}$.  
The anomaly budget is distributed uniformly across time, resulting in a per-step constraint $\tau'=\tau/\AttackDuration=\csvConstYesDefYesAtt{ADStauPrime}{1}$.

\paragraph*{Evaluation Metrics.}
Let $\{\bm{u}_k\},\; k \in \{0,\ldots,T-1\}$ denote a sequence of control commands. The following metric quantifies the control effort:
\begin{align*}
\mean{\{\bm{u}_k\}} 
&\coloneqq
\frac{1}{T}
\sum_{k}
\bigl\lVert \bm{u}_k \bigr\rVert_2.
\end{align*}

Let
$\{\bar{\bm{p}}_k\},\; k \in \{0,\ldots,T-1\}$ denote a reference hand position trajectory, and
$\{\bm{p}_k\}$ the realized one. The following metrics quantify a task deviation:
\begin{align*}
\devmax{\bar{\bm{p}}_k}{\bm{p}_k} 
&\coloneqq
\max_{k}
\bigl\lVert \bar{\bm{p}}_k - \bm{p}_k \bigr\rVert_2, \\
\devRMS{\bar{\bm{p}}_k}{\bm{p}_k} 
&\coloneqq
\left(
  \frac{1}{\AttackDuration}
  \sum_{k}
  \bigl\lVert \bar{\bm{p}}_k - \bm{p}_k \bigr\rVert_2^2
\right)^{1/2}.
\end{align*}

\subsection{Main Results}

\paragraph*{Undefended (U).}
When no active or passive defence is in place, the end-effector closely follows the malicious trajectory with a small maximum and RMS deviation from the reference (see first column of \cref{tab:metrics}).

\paragraph*{Passive Defence Only (PO).}
In the presence of the $\chi^2$ ADS using anomaly measure $z_k$ as in \cref{eq:mahalanobis_distance}, the attacker needs to satisfy the constraint of \cref{eq:stealthConstrUnexpand}. The resulting task deviation mildly increases compared to the Undefended case, and the attacker can still reach its final position goal (see the upper diagrams of \cref{fig:EXP1robot} and \cref{fig:EXP1signals}, and the second column of \cref{tab:metrics}).
Throughout, the anomaly measure $z_k$ remains below $\tau'$, so the passive ADS never fires alarms (see second diagram of \cref{fig:EXP1signals}).
\paragraph*{With the proposed active defence (D).} With the command scaling of \cref{eq:finalControl} in place,
in the initial phase ($k<250$) the optimal attack increment $\bm{\Delta}_k^\star$ and the deviation $\|\TargetPositionAttack_k - \bm{p}_k\|_2$ mirror PO and U, as $\tilde{z}_k$ is small and $\pinExp(\cdot)\approx1$ (see \cref{fig:EXP1signals,fig:EXP1signals2}).
Subsequently, $\tilde{z}_k$ increases, determining an increasing scaling of command $\bm{u}_k$ (see \cref{fig:EXP1signals2}).
This incentivizes the attacker to increase injections; $z_k$ quickly climbs to the per-step limit $\tau'$ rendering the constraint of \cref{eq:stealthConstrUnexpand} active for a short time. 

As the attack progresses, gain reduction $\pinExp(\tilde{z}_k)$ disincentivizes the attacker policy of \cref{sec:attackerPD}, since increased anomaly would further strengthen scaling.
Consequently, $\tilde{z}_k$ tends to settle to an asymptotic value at which the acceleration commands are greatly diminished in magnitude.
Relative to PO, the realized accelerations are significantly attenuated, yielding a much smaller mean control effort (see \cref{tab:metrics}).

\subsection{Comparative Analysis and Discussion}

\begin{table}[t]
\centering
\caption{Task deviation and control effort metrics}
\label{tab:metrics}
\setlength{\tabcolsep}{4.5pt}
\begin{tabular}{lccc}
\toprule
Metric & \makecell{U \\ (no defence)} & \makecell{PO\\($\chi^2$ only)} & \makecell{D\\(active defence)}  \\
\midrule
$\devmax{\TargetPositionAttack}{\bm{p}}$&
$\csvNoDefeNoADSYesAtt{attackerDeviation_HandP_l2_max}{1}$ &
$\csvNoDefeYesAtt{attackerDeviation_HandP_l2_max}{1}$ &
$\csvYesDefYesAtt{attackerDeviation_HandP_l2_max}{1}$ 
\\
$\devmax{\TargetPosition}{\bm{p}}$ &
$\csvNoDefeNoADSYesAtt{deviation_HandP_l2_max}{1}$ &
$\csvNoDefeYesAtt{deviation_HandP_l2_max}{1}$ &
$\csvYesDefYesAtt{deviation_HandP_l2_max}{1}$ 
\\ 
$\devRMS{\TargetPositionAttack}{\bm{p}}$&
$\csvNoDefeNoADSYesAtt{att_rmsError}{1}$ &
$\csvNoDefeYesAtt{att_rmsError}{1}$ &
$\csvYesDefYesAtt{att_rmsError}{1}$ 
\\
$\devRMS{\TargetPosition}{\bm{p}}$ &
$\csvNoDefeNoADSYesAtt{hand_rmsError}{1}$ &
$\csvNoDefeYesAtt{hand_rmsError}{1}$ &
$\csvYesDefYesAtt{hand_rmsError}{1}$ 
\\ 
$\mean{\{\bm{u}_k\}} $  &
$\csvNoDefeNoADSYesAtt{u_final_l2_avg}{1}$ &
$\csvNoDefeYesAtt{u_final_l2_avg}{1}$ &
$\csvYesDefYesAtt{u_final_l2_avg}{1}$ 
\\
\bottomrule
\end{tabular}
\vspace{-0.5em}
\end{table}

Without defence, or with passive defence only, stealthy FDIAs can precisely steer the end-effector while remaining within the ADS budget; the passive detector is therefore ineffective in isolation.

With the proposed active defence, the attacker faces the following trade-off: increasing injection raises the proposed anomaly-aware command scaling, while reducing injection makes the stealthy attack less effective.
Overall, the attack decreases in effectiveness and the proposed anomaly-aware score $\tilde{z}_k$ settles to an asymptotic value. This plateau represents the sub-optimal injection magnitude the attacker is forced to commit to, akin to an equilibrium in a Stackelberg game.
\cref{fig:EXP1signals} explains the closed-loop interaction:
(i) the attacker increases $\bm{\Delta}^\star_k$ to counter scaling (\cref{fig:EXP1signals2}, top), 
(ii) this raises $z_k$ until the per-step constraint becomes active, 
(iii) meanwhile $\tilde{z}_k$ (based on the actuation-projected predictor) tracks the accumulating state discrepancy and drives $\pinExp(\tilde{z}_k)$ down, 
(iv) the realized $\bm{u}_k$ is therefore attenuated, limiting achievable task-space acceleration and displacement.


An important consideration regards the safety of the proposed active defence in the absence of attacks. 
By selecting the design confidence $\desprob$ and gain floor $\beta$ via \cref{th:corollaryPercentage}, unwarranted scaling under $\mathcal{H}_0$ can be made arbitrarily rare, although statistical fluctuations can in principle still trigger noticeable attenuation.
However, because the gain scaling acts on the command $\bm{u}_k$ prior to the inverse-dynamics compensation, it does not alter gravity or Coriolis compensation.
Finally, extending the stability analysis to the time-varying case of $f(\tilde{z}_k)$ is left for future work; promising directions include (i) low-pass filtering $f(\tilde{z}_k)$, (ii) enforcing a probabilistic lower bound $f\ge\beta$, and (iii) invoking slowly-varying stability results.


\section{Conclusion}\label{sec:conclusion}
This paper addressed the resilience of robotic manipulators against stealthy false data injection attacks (FDIAs).  
We showed that feedback linearization induces an \emph{integrator vulnerability} that allows persistent sensor corruption to remain undetected by $\chi^2$ anomaly detectors while driving the end-effector off-task.  
To counter this threat, we introduced \emph{anomaly-aware command scaling}, a lightweight modification of the control law that attenuates inputs as a function of an anomaly score derived from a measurement-free, actuation-projected predictor. Our analysis established two key guarantees: (i) probabilistic bounds on actuation loss in nominal operation, enabling minimally invasive deployment, and (ii) preservation of closed-loop stability under bounded attenuation.  

On the adversary side, we derived a convex QCQP formulation of the one-step optimal stealthy attack that incorporates the defender's policy, yielding a Stackelberg-type game-theoretic benchmark against which to evaluate defences.
Simulation results on a 6-DOF manipulator confirmed that the proposed defence substantially reduces attacker-induced task-space deviations while maintaining nominal tracking performance in the absence of attacks.

Overall, the proposed approach illustrates how active modification of the control law can transform anomaly detection into a practical resilience mechanism for cybersecure robotic manipulation.

\bibliographystyle{IEEEtran}
\bibliography{references}
\addcontentsline{toc}{section}{References}

 \appendix
  \section{Derivation of the optimal stealthy attack QCQP}\label{app:qcqp}
 \subsection*{Derivation of the Quadratic Objective}
This appendix derives the quadratic cost in \cref{eq:quadCostAttack} from the attack objective \cref{eq:objFunIntro}, the incremental parametrization \cref{eq:incremental}, and the first-order approximation \cref{eq:nonLinearMap2,eq:linearizedMap}.

Starting from \cref{eq:objFunIntro}, the attacker solves
\[
    \min_{\bm{a}_k}\ \tfrac{1}{2}
    \left\|
        \TargetAccAttack_{k+2}
        - \pi_{\ddot{\bm{p}}}\!\bigl(\simul_{k,2}(\bm{a}_k)\bigr)
    \right\|^2.
\]
Using the incremental attack model \cref{eq:incremental},
\[
    \bm{a}_k = \bm{a}_{k-1} + \bm{\Delta}_k,
\]
and the local linearization \cref{eq:nonLinearMap2},
\[
    \pi_{\ddot{\bm{p}}}\!\bigl(\simul_{k,2}(\bm{a}_k)\bigr)
    \approx
    \pi_{\ddot{\bm{p}}}\!\bigl(\simul_{k,2}(\bm{a}_{k-1})\bigr)
    + \simulLin_k \bm{\Delta}_k,
\]
the objective becomes
\begin{align*}
    \tfrac{1}{2}
    \left\|
        \TargetAccAttack_{k+2}
        - \pi_{\ddot{\bm{p}}}\!\bigl(\simul_{k,2}(\bm{a}_{k-1})\bigr)
        - \simulLin_k \bm{\Delta}_k
    \right\|^2.
\end{align*}
Define the baseline tracking error
\[
    \bm{e}_k
    \coloneqq
    \TargetAccAttack_{k+2}
    - \pi_{\ddot{\bm{p}}}\!\bigl(\simul_{k,2}(\bm{a}_{k-1})\bigr),
\]
which is the acceleration mismatch obtained when the attack is held constant, i.e., when $\bm{\Delta}_k=\bm{0}$. Then
\[
    \tfrac{1}{2}
    \left\|
        \TargetAccAttack_{k+2}
        - \pi_{\ddot{\bm{p}}}\!\bigl(\simul_{k,2}(\bm{a}_{k-1})\bigr)
        - \simulLin_k \bm{\Delta}_k
    \right\|^2
    =
    \tfrac{1}{2}\|\simulLin_k \bm{\Delta}_k - \bm{e}_k\|^2,
\]
where the identity $\|-\bm{v}\|^2=\|\bm{v}\|^2$ has been used.

Expanding the squared norm gives
\begin{align*}
    \tfrac{1}{2}\|\simulLin_k \bm{\Delta}_k - \bm{e}_k\|^2
    &=
    \tfrac{1}{2}
    (\simulLin_k \bm{\Delta}_k - \bm{e}_k)^\top
    (\simulLin_k \bm{\Delta}_k - \bm{e}_k) \\
    &=
    \tfrac{1}{2}\bm{\Delta}_k^\top \simulLin_k^\top \simulLin_k \bm{\Delta}_k
    - \bm{e}_k^\top \simulLin_k \bm{\Delta}_k
    + \tfrac{1}{2}\bm{e}_k^\top \bm{e}_k.
\end{align*}
Adding the Tikhonov regularization term introduced in \cref{eq:quadCostAttack} yields
\begin{align*}
    J(\bm{\Delta}_k)
    &=
    \tfrac{1}{2}\bm{\Delta}_k^\top \simulLin_k^\top \simulLin_k \bm{\Delta}_k
    - \bm{e}_k^\top \simulLin_k \bm{\Delta}_k
    + \tfrac{1}{2}\bm{e}_k^\top \bm{e}_k
    + \tfrac{\optLambda}{2}\bm{\Delta}_k^\top \bm{I}\bm{\Delta}_k \\
    &=
    \tfrac{1}{2}\bm{\Delta}_k^\top
    \bigl(\simulLin_k^\top \simulLin_k + \optLambda \bm{I}\bigr)
    \bm{\Delta}_k
    - \bm{e}_k^\top \simulLin_k \bm{\Delta}_k
    + \tfrac{1}{2}\bm{e}_k^\top \bm{e}_k.
\end{align*}
Since $\bm{e}_k^\top \simulLin_k \bm{\Delta}_k$ is a scalar, it is equal to $\bm{\Delta}_k^\top \simulLin_k^\top \bm{e}_k$. Therefore, after discarding the constant term $\tfrac{1}{2}\bm{e}_k^\top \bm{e}_k$, which does not affect the minimizer, the objective takes the standard quadratic form used in \cref{eq:optimProb}:
\[
    \min_{\bm{\Delta}_k}
    \tfrac{1}{2}\bm{\Delta}_k^\top \bm{H}\bm{\Delta}_k
    + \bm{g}^\top \bm{\Delta}_k,
\]
with
\begin{align*}
    \bm{H} &= \simulLin_k^\top \simulLin_k + \optLambda \bm{I}, \\
    \bm{g} &= -\simulLin_k^\top \bm{e}_k \\
           &= -\simulLin_k^\top
           \Bigl(
               \TargetAccAttack_{k+2}
               - \pi_{\ddot{\bm{p}}}\!\bigl(\simul_{k,2}(\bm{a}_{k-1})\bigr)
           \Bigr).
\end{align*}
These are the coefficients used in \cref{eq:optimProb}, and they are equivalent to \cref{eq:quadCostAttack} up to the discarded additive constant.

\subsection*{Derivation of the Stealth Constraint}
The stealth condition in \cref{eq:stealthConstrUnexpand} is
\[
    \bigl(\bm{\Delta}_k + \baselineIn_k\bigr)^\top
    \bm{\Sigma}^{-1}
    \bigl(\bm{\Delta}_k + \baselineIn_k\bigr)
    \le \tau',
\]
where $\tau'$ is the per-step budget defined in \cref{eq:tauQuota}. Since $\bm{\Sigma}\succ0$, the matrix $\bm{\Sigma}^{-1}$ is symmetric, and the quadratic expression expands as
\begin{align*}
    \bigl(\bm{\Delta}_k + \baselineIn_k\bigr)^\top
    \bm{\Sigma}^{-1}
    \bigl(\bm{\Delta}_k + \baselineIn_k\bigr)
    &=
    \bm{\Delta}_k^\top \bm{\Sigma}^{-1} \bm{\Delta}_k
    + \bm{\Delta}_k^\top \bm{\Sigma}^{-1}\baselineIn_k \\
    &\quad
    + \baselineIn_k^\top \bm{\Sigma}^{-1}\bm{\Delta}_k
    + \baselineIn_k^\top \bm{\Sigma}^{-1}\baselineIn_k \\
    &=
    \bm{\Delta}_k^\top \bm{\Sigma}^{-1} \bm{\Delta}_k
    + 2\baselineIn_k^\top \bm{\Sigma}^{-1}\bm{\Delta}_k \\
    &\quad + \baselineIn_k^\top \bm{\Sigma}^{-1}\baselineIn_k.
\end{align*}
Rearranging all terms gives
\[
    \bm{\Delta}_k^\top \bm{\Sigma}^{-1} \bm{\Delta}_k
    + 2\baselineIn_k^\top \bm{\Sigma}^{-1}\bm{\Delta}_k
    + \baselineIn_k^\top \bm{\Sigma}^{-1}\baselineIn_k
    - \tau'
    \le 0.
\]
This is the quadratic inequality in \cref{eq:optimProb} with
\begin{align*}
    \optimAmat &= \bm{\Sigma}^{-1}, \\
    \bm{b} &= 2\bm{\Sigma}^{-1}\baselineIn_k, \\
    c &= \baselineIn_k^\top \bm{\Sigma}^{-1}\baselineIn_k - \tau'.
\end{align*}
Hence, combining the quadratic objective above with the expanded stealth condition yields the convex QCQP  of \cref{eq:optimProb}.

  \section{Derivation of the actuation-projected residual covariance}\label{app:projectedCOV}
  \subsection*{Derivation of the Actuation-Projected Residual Covariance}
This appendix provides the detailed proof of \cref{thm:residual_cov}. The derivation starts from the plant and estimator dynamics in \cref{eq:lti_plant,eq:kalmanX,eq:kalmanR}, the synchronization condition \cref{eq:resync}, the actuation-projected predictor \cref{eq:actuation_projected_state}, and the residual definition \cref{eq:residualProj}.

Under the nominal hypothesis $\mathcal{H}_0$, there is no attack, so $\tilde{\bm{y}}_k=\bm{y}_k$ in \cref{eq:measurement_model}. We analyze the covariance over a horizon of $k$ steps after the most recent synchronization, which is taken without loss of generality at time $0$.

\subsection*{Error Dynamics}
Define the Kalman estimation error as
\[
    \bm{e}_k \coloneqq \bm{x}_k - \hat{\bm{x}}_k.
\]
Using \cref{eq:lti_plant,eq:kalmanX,eq:kalmanR}, its one-step evolution is
\begin{align*}
    \bm{e}_{k+1}
    &= \bm{x}_{k+1} - \hat{\bm{x}}_{k+1} \\
    &= (\bm{A}\bm{x}_k + \bm{B}\bm{u}_k + \bm{w}_k)
      - (\bm{A}\hat{\bm{x}}_k + \bm{B}\bm{u}_k + \KalmanGain \bm{r}_k) \\
    &= \bm{A}(\bm{x}_k - \hat{\bm{x}}_k) + \bm{w}_k - \KalmanGain \bm{r}_k \\
    &= \bm{A}\bm{e}_k + \bm{w}_k - \KalmanGain(\bm{y}_k - \bm{C}\hat{\bm{x}}_k) \\
    &= \bm{A}\bm{e}_k + \bm{w}_k - \KalmanGain(\bm{C}\bm{x}_k + \bm{v}_k - \bm{C}\hat{\bm{x}}_k) \\
    &= (\bm{A} - \KalmanGain\bm{C})\bm{e}_k + \bm{w}_k - \KalmanGain\bm{v}_k.
\end{align*}

Next, recall from \cref{eq:residualProj} that the actuation-projected residual is
\[
    \tilde{\bm{r}}_k = \hat{\bm{x}}_k - \ActuationProjectedState_k.
\]
Combining \cref{eq:kalmanX,eq:actuation_projected_state} gives
\begin{align*}
    \tilde{\bm{r}}_{k+1}
    &= \hat{\bm{x}}_{k+1} - \ActuationProjectedState_{k+1} \\
    &= (\bm{A}\hat{\bm{x}}_k + \bm{B}\bm{u}_k + \KalmanGain \bm{r}_k)
      - (\bm{A}\ActuationProjectedState_k + \bm{B}\bm{u}_k) \\
    &= \bm{A}(\hat{\bm{x}}_k - \ActuationProjectedState_k) + \KalmanGain \bm{r}_k \\
    &= \bm{A}\tilde{\bm{r}}_k + \KalmanGain(\bm{y}_k - \bm{C}\hat{\bm{x}}_k) \\
    &= \bm{A}\tilde{\bm{r}}_k + \KalmanGain(\bm{C}\bm{x}_k + \bm{v}_k - \bm{C}\hat{\bm{x}}_k) \\
    &= \bm{A}\tilde{\bm{r}}_k + \KalmanGain\bm{C}\bm{e}_k + \KalmanGain\bm{v}_k.
\end{align*}

\subsection*{Augmented Linear System}
Introduce the augmented state
\[
    \bm{z}_k \coloneqq
    \begin{bmatrix}
        \bm{e}_k \\
        \tilde{\bm{r}}_k
    \end{bmatrix},
    \qquad
    \bm{\eta}_k \coloneqq
    \begin{bmatrix}
        \bm{w}_k \\
        \bm{v}_k
    \end{bmatrix}.
\]
The two recursions above can be written compactly as in \cref{eq:projAnomDynamics}:
\[
    \bm{z}_{k+1} = \bm{F}\bm{z}_k + \bm{G}\bm{\eta}_k,
\]
with $\bm{F}$ and $\bm{G}$  as  in \cref{eq:covMatF,eq:covMatG}:
\[
    \bm{F} =
    \begin{bmatrix}
        \bm{A}-\KalmanGain\bm{C} & \bm{0} \\
        \KalmanGain\bm{C} & \bm{A}
    \end{bmatrix},
    \qquad
    \bm{G} =
    \begin{bmatrix}
        \bm{I} & -\KalmanGain \\
        \bm{0} & \KalmanGain
    \end{bmatrix}.
\]

\subsection*{Covariance Propagation}
Let $\bm{P}_{z,k} \coloneqq \mathbb{E}[\bm{z}_k\bm{z}_k^\top]$. Taking the outer product of $\bm{z}_{k+1}$ as in \cref{eq:projAnomDynamics} yields
\begin{align*}
    \bm{P}_{z,k+1}
    &= \mathbb{E}\!\left[
        (\bm{F}\bm{z}_k + \bm{G}\bm{\eta}_k)
        (\bm{F}\bm{z}_k + \bm{G}\bm{\eta}_k)^\top
    \right] \\
    &= \bm{F}\,\mathbb{E}[\bm{z}_k\bm{z}_k^\top]\bm{F}^\top
    + \bm{F}\,\mathbb{E}[\bm{z}_k\bm{\eta}_k^\top]\bm{G}^\top \\
    &\quad
    + \bm{G}\,\mathbb{E}[\bm{\eta}_k\bm{z}_k^\top]\bm{F}^\top
    + \bm{G}\,\mathbb{E}[\bm{\eta}_k\bm{\eta}_k^\top]\bm{G}^\top.
\end{align*}
Under the standard Kalman-filter assumptions, the noise vector $\bm{\eta}_k$ is white, zero-mean, and independent of $\bm{z}_k$, which depends only on noises up to time $k-1$. Therefore,
\[
    \mathbb{E}[\bm{z}_k\bm{\eta}_k^\top] = \bm{0},
    \qquad
    \mathbb{E}[\bm{\eta}_k\bm{z}_k^\top] = \bm{0}.
\]
Moreover, because $\bm{w}_k$ and $\bm{v}_k$ are mutually independent with covariances $\bm{Q}$ and $\bm{R}$,
\[
    \mathbb{E}[\bm{\eta}_k\bm{\eta}_k^\top]
    =
    \begin{bmatrix}
        \bm{Q} & \bm{0} \\
        \bm{0} & \bm{R}
    \end{bmatrix}
    =
    \operatorname{diag}(\bm{Q},\bm{R}).
\]
Substituting these identities gives
\[
    \bm{P}_{z,k+1}
    =
    \bm{F}\bm{P}_{z,k}\bm{F}^\top
    + \bm{G}\operatorname{diag}(\bm{Q},\bm{R})\bm{G}^\top.
\]
With $\bm{\Pi}$ defined as in \cref{eq:covMatPi}, this is the recursion stated in \cref{eq:covRecurs}:
\[
    \bm{P}_{z,k+1} = \bm{F}\bm{P}_{z,k}\bm{F}^\top + \bm{\Pi}.
\]

\subsection*{Initial Condition and Extracted Covariance}
At the synchronization instant, \cref{eq:resync} implies
\[
    \tilde{\bm{r}}_0
    =
    \hat{\bm{x}}_0 - \ActuationProjectedState_0
    =
    \bm{0}.
\]
By Assumption~\ref{ass:converged}, the Kalman filter is already at steady state, so
\[
    \operatorname{Cov}(\bm{e}_0) = \bm{P}.
\]
Since $\tilde{\bm{r}}_0$ is deterministic (i.e., $\tilde{\bm{r}}_0 = \bm{0}$ from \cref{eq:resync} ), its covariance and cross-covariance with $\bm{e}_0$ vanish. Consequently,
\[
    \bm{P}_{z,0}
    =
    \begin{bmatrix}
        \operatorname{Cov}(\bm{e}_0) & \mathbb{E}[\bm{e}_0\tilde{\bm{r}}_0^\top] \\
        \mathbb{E}[\tilde{\bm{r}}_0\bm{e}_0^\top] & \operatorname{Cov}(\tilde{\bm{r}}_0)
    \end{bmatrix}
    =
    \begin{bmatrix}
        \bm{P} & \bm{0} \\
        \bm{0} & \bm{0}
    \end{bmatrix}
    =
    \operatorname{diag}(\bm{P},\bm{0}).
\]
Iterating \cref{eq:covRecurs} from this initial condition yields $\bm{P}_{z,k}$, and the desired covariance $\bm{\Sigma}_{\tilde{r},k} = \operatorname{Cov}(\tilde{\bm{r}}_k)$ is the lower-right block of $\bm{P}_{z,k}$, as claimed in \cref{thm:residual_cov}.

  \section{Proof of stability}\label{app:stability}
  \subsection*{Detailed Proof of Frozen-Gain Stability}
This appendix expands item~2 of \cref{th:main}.
By inverse-dynamics compensation, the joint dynamics reduce to the decoupled double-integrator model in \cref{eq:idealDynamics}. Therefore, it suffices to analyze a single joint (or DOF) $j$ with state $\bm{x}_{k,j}\in\mathbb{R}^2$ and scalar acceleration input $u_{k,j}\in\mathbb{R}$. Its sampled dynamics are
\begin{equation}
    \bm{x}_{k+1,j}=\bm{A}_{j}\bm{x}_{k,j}+\bm{B}_{j}u_{k,j},
    \bm{A}_{j}=\begin{bmatrix}1&T_s\\0&1\end{bmatrix},\;
    \bm{B}_{j}=\begin{bmatrix}T_s^2/2\\T_s\end{bmatrix},
\end{equation}
where $T_s > 0$ is the sampling time.

Let the nominal state feedback induced by the controller gains of \cref{eq:pid_actuation_task,eq:pid_actuation_joint} be
\begin{equation}
    u_{k,j}^{\text{nom}}=-\bm{K}\bm{x}_{k,j},\qquad \bm{K}=[K_p\;K_d],
\end{equation}
and assume that the corresponding nominal closed-loop matrix
\[
    \bm{A}_{\mathrm{cl}}(1)=\bm{A}_{j}-\bm{B}_{j}\bm{K}
\]
is Schur stable. For discrete-time linear systems, Schur stability is equivalent to asymptotic stability.

Freezing the command scaling to a constant $\bar f\in(0,1]$ yields
\begin{equation}
    u_{k,j} = \bar f\, u_{k,j}^{\text{nom}} = -\bar f\,\bm{K}\bm{x}_{k,j},
\end{equation}
which is the frozen-gain case of \cref{eq:finalControl}. Substituting this expression into the sampled double-integrator dynamics gives
\[
\bm{x}_{k+1,j}=\bm{A}_{j}\bm{x}_{k,j}+\bm{B}_{j}\bigl(-\bar f\,\bm{K}\bm{x}_{k,j}\bigr)
=\bigl(\bm{A}_{j}-\bar f\,\bm{B}_{j}\bm{K}\bigr)\bm{x}_{k,j},
\]
and thus the frozen-gain closed-loop matrix is
\begin{equation}
    \bm{A}_{\mathrm{cl}}(\bar f)=\bm{A}_{j}-\bar f\,\bm{B}_{j}\bm{K} =
\begin{bmatrix}
	1-\tfrac{\bar f T_s^2}{2}K_p & T_s-\tfrac{\bar f T_s^2}{2}K_d \\
	-\bar f T_s K_p & 1-\bar f T_s K_d
\end{bmatrix}.
\end{equation}

The characteristic polynomial of $\bm{A}_{\mathrm{cl}}(\bar f)$ is
\begin{equation}
    \lambda^2-\Big(2-\bar f T_sK_d-\tfrac{\bar f T_s^2}{2}K_p\Big)\lambda+\Big(1-\bar f T_sK_d+\tfrac{\bar f T_s^2}{2}K_p\Big).
\end{equation}
For a second-order polynomial $\lambda^2+a_1\lambda+a_0$, the Jury criterion states that Schur stability is equivalent to
\[
    1+a_1+a_0>0,\qquad 1-a_1+a_0>0,\qquad 1-a_0>0.
\]
Applying these conditions to the characteristic polynomial above, with
\[
    a_1 = -\Bigl(2-\bar f T_sK_d-\tfrac{\bar f T_s^2}{2}K_p\Bigr),
    \qquad
    a_0 = 1-\bar f T_sK_d+\tfrac{\bar f T_s^2}{2}K_p,
\]
gives
\begin{align*}
    1+a_1+a_0 &= \bar f T_s^2K_p, \\
    1-a_1+a_0 &= 2\bigl(2-\bar f T_sK_d\bigr), \\
    1-a_0 &= \bar f T_s\Bigl(K_d-\tfrac{T_s}{2}K_p\Bigr).
\end{align*}
Hence, $\bm{A}_{\mathrm{cl}}(\bar f)$ is Schur stable if and only if
\begin{equation}
    \bar f T_s^2K_p>0,\qquad 2-\bar f T_sK_d>0,\qquad K_d>\tfrac{T_s}{2}K_p.
\end{equation}
Since the nominal matrix $\bm{A}_{\mathrm{cl}}(1)$ is Schur stable by assumption, these Jury conditions hold at $\bar f=1$. Therefore,
\[
    K_p>0,\qquad 2-T_sK_d>0,\qquad K_d>\tfrac{T_s}{2}K_p.
\]
Now consider any $\bar f\in(0,1]$. The first and third inequalities remain valid because $\bar f>0$, while the second becomes less restrictive:
\[
    2-\bar f T_sK_d \ge 2-T_sK_d > 0.
\]
Thus all Jury conditions continue to hold for every frozen scaling factor $\bar f\in(0,1]$. Consequently, $\bm{A}_{\mathrm{cl}}(\bar f)$ is Schur stable for all such $\bar f$, which proves the frozen-gain stability claim in \cref{th:main}.

For a manipulator with $p$ decoupled double integrators, the frozen-gain closed-loop matrix is block diagonal across DOFs. The single-DOF argument above applies componentwise.


\end{document}